\newtheorem{corollary}{Corollary}
\newtheorem{theorem}{\bf Theorem}
\newcounter{step}
\newlength{\totlinewidth}
\newenvironment{algorithm}{%
  \rule{\linewidth}{1pt}
  \begin{list}{}%
    {\usecounter{step}%
      \settowidth{\labelwidth}{\textbf{Step 2:}}%
      \setlength{\leftmargin}{\labelwidth}%
      \setlength{\topsep}{-2pt}%
      \addtolength{\leftmargin}{\labelsep}%
      \addtolength{\leftmargin}{2mm}%
      \setlength{\rightmargin}{2mm}%
      \setlength{\totlinewidth}{\linewidth}%
      \addtolength{\totlinewidth}{\leftmargin}%
      \addtolength{\totlinewidth}{\rightmargin}%
      \setlength{\parsep}{0mm}%
      \raggedright}}%
  {\end{list}%
  \rule{\linewidth}{1pt}}
\newcounter{substep}
\newlength{\aligntop}
\newlength{\alignbot}
\renewenvironment{align}{%
  \vspace{\aligntop}
  \start@align\@ne\st@rredfalse\m@ne
}{%
  \math@cr \black@\totwidth@
  \egroup
  \ifingather@
    \restorealignstate@
    \egroup
    \nonumber
    \ifnum0=`{\fi\iffalse}\fi
  \else
    $$%
  \fi
  \ignorespacesafterend%
  \vspace{\alignbot}\par\noindent
} \makeatother
\newcommand\semihuge{\@setfontsize\semihuge{19.3}{25}}
\newcommand\semismall{\@setfontsize\semihuge{12.4}{15}}
\begin{document}

\title{\Huge Convergence Time Optimization for Federated Learning over Wireless Networks}

\author{{Mingzhe Chen}, \emph{Member, IEEE}, H. Vincent Poor, \emph{Life Fellow, IEEE}, Walid Saad, \emph{Fellow, IEEE},\\ and Shuguang Cui, \emph{Fellow, IEEE} \vspace*{-2em}\\ 
\thanks{{
This work was supported in part by the National Key R\&D Program of China with grant No. 2018YFB1800800, by the Key Area R\&D Program of Guangdong Province with grant No. 2018B030338001, by Shenzhen Outstanding Talents Training Fund, and by Guangdong Research Project No. 2017ZT07X152, and in part by the U.S. National Science Foundation under Grants CCF-1908308, CCF-0939370, and CCF-1513915.}}
\thanks{M. Chen and H. V. Poor are with the Department of Electrical and Computer Engineering, Princeton University, Princeton, NJ, 08544, USA, Emails: \protect{mingzhec@princeton.edu}, \protect\url{poor@princeton.edu}.}
\thanks{W. Saad is with the Wireless@VT, Bradley Department of Electrical and Computer Engineering, Virginia Tech, Blacksburg, VA, 24060, USA, Email: \protect{walids@vt.edu}.}
\thanks{S. Cui is with the Shenzhen Research Institute of Big Data and School of Science and Engineering, the Chinese University of Hong Kong, Shenzhen, 518172, China, Email: \protect{shuguangcui@cuhk.edu.cn} }
 }

\maketitle
%
\begin{abstract}
In this paper, the convergence time of federated learning (FL), when deployed over a realistic wireless network, is studied.
In particular, a wireless network is considered in which wireless users transmit their local FL models (trained using their locally collected data) to a base station (BS). The BS, acting as a central controller, generates a global FL model using the received local FL models and broadcasts it back to all users. Due to the limited number of resource blocks (RBs) in a wireless network, only a subset of users can be selected to transmit their local FL model parameters to the BS at each learning step. Moreover, since each user has unique training data samples, the BS prefers to include all local user FL models to generate a converged global FL model. Hence, the FL {\color{black}training loss} and convergence time will be significantly affected by the user selection scheme.   
{\color{black}Therefore, it is necessary to design an appropriate user selection scheme that can select the users who can contribute toward improving the FL convergence speed more frequently.} 
This joint learning, wireless resource allocation, and user selection problem is formulated as an optimization problem whose goal is to minimize the FL convergence time and the FL {\color{black}training loss}. 
 To solve this problem, a probabilistic user selection scheme is proposed such that the BS is connected to the users whose local FL models have significant effects on the global FL model with high probabilities. Given the user selection policy, the uplink RB allocation can be determined. To further reduce the FL convergence time, artificial neural networks (ANNs) are used to estimate the local FL models of the users that are not allocated any RBs for local FL model transmission at each given learning step, which enables the BS to improve the global model, the FL convergence speed, and the {\color{black}training loss.}
Simulation results show that the proposed approach
 can reduce the FL convergence time by up to $56\%$ and improve the accuracy of identifying handwritten digits by up to $3\%$, compared to a standard FL algorithm.
\end{abstract}
{\small \emph{Index Terms} --- Federated learning; wireless resource allocation; probabilistic user selection; artificial neural networks.}
{\renewcommand{\thefootnote}{\fnsymbol{footnote}}
\footnotetext{A preliminary version of this work \cite{chen2020ICC} appears in the Proceedings of the 2020 IEEE International Conference on Communications.}}

\section{Introduction}
To train traditional machine learning algorithms for data analysis and inference, a central controller requires all users' training data samples. However, in wireless networks, it is impractical for wireless users to transmit their training data samples to such a central controller due to privacy concerns and the nature of limited wireless resources \cite{saad2019vision}. In order to train a machine learning model without collection of all of the users' training data samples, \emph{federated learning} (FL) has been proposed in \cite{bonawitz2019towards}. 
 FL is, in essence, a distributed machine learning algorithm that enables users to collaboratively learn a shared machine learning model while keeping their collected data on their devices. {\color{black} For wireless communications, FL enables many applications such as Internet of Things (IoT) (e.g., ultra reliable
low latency services), extended reality (XR), and autonomous vehicles. For example,
in autonomous driving, FL enables vehicles to complete the environmental perception and
object detection tasks in milliseconds. 
However,  implementing FL and its applications over real-world wireless networks requires edge devices and the BS to repeatedly exchange their FL model parameters. Due to limited wireless resources such as bandwidth, in a wireless network, only a subset of devices can engage in FL. Meanwhile, FL model parameters that are transmitted from the users to a BS are subject to errors and delays caused by the wireless channel which affects the learning performance. For example, when an FL algorithm is implemented over a wireless network, its convergence time not only depends on the number of training steps, but also depends on the ML model parameter transmission time at each training step. Meanwhile, as the number of neurons in the ML model increases, the computation and communication delay of each device will significantly increase thus increasing the FL convergence time. In consequence, energy-constrained devices that are running time-sensitive applications (e.g. virtual reality and ultra-reliable low-latency communication) may not be able or want to wait to complete such a long time training process. Therefore, it is necessary to consider the optimization of wireless networks to reduce FL training time and training loss.}
 
 
 
Recently, the works in \cite{zhu2018towards,8733825,8664630,chen2019joint,8737464,yang2019energy,8851408,vu2019cell,ren2019accelerating,8851249,feng2018joint,skatchkovsky2019federated} has studied important problems related to the deployment of FL over wireless networks. In \cite{zhu2018towards}, the authors provided a new set of design principles for wireless communication and edge learning; however, they did not provide any mathematically rigorous result for minimizing {\color{black}FL training loss} over wireless networks. The authors  in \cite{8733825} proposed a blockchain-based FL scheme that enables edge devices to train FL without sending FL model parameters to a central controller. The works in \cite{8664630} and \cite{chen2019joint} optimized the FL performance with communication constraints such as limited spectrum and computational resources. In \cite{8737464} and \cite{yang2019energy}, the authors investigated the tradeoff between the FL convergence time and users' energy consumption. 
The authors in \cite{8851408} developed an echo state network based FL to predict locations and orientations of wireless virtual reality users so as to minimize their breaks in presence. The work in \cite{vu2019cell} developed a cell-free massive multiple-input multiple-output (MIMO) system for the implementation of FL over wireless networks. In \cite{ren2019accelerating}, the authors define a metric to evaluate the FL performance and optimized the batchsize selection and communication resource allocation for the acceleration of the FL training process. The work in \cite{8851249} derived an analytical model to characterize the {\color{black}training loss} of FL in wireless networks and evaluate the effectiveness of different scheduling policies.  
The authors in \cite{feng2018joint} studied the use of the relay network to construct a cooperative communication platform for supporting FL model transmission. In \cite{skatchkovsky2019federated}, the authors developed a federated learning based spiking neural network. 
However, most of these existing works \cite{zhu2018towards,8733825,8664630,chen2019joint,8737464,8851408,vu2019cell,ren2019accelerating,8851249,feng2018joint,yang2019energy,skatchkovsky2019federated} whose goal is to minimize the FL convergence time must sacrifice the {\color{black}training loss} of the FL algorithm. For example, in \cite{8737464} and \cite{yang2019energy}, the authors sacrificed the training accuracy of FL to improve the convergence time. Moreover, most of these existing works  \cite{zhu2018towards,8733825,8664630,chen2019joint,8737464,8851408,vu2019cell,ren2019accelerating,8851249,feng2018joint,yang2019energy,skatchkovsky2019federated} used random or fixed user selection methods for FL training, which may significantly increase the FL convergence time and also decrease the FL {\color{black}training loss}. In addition, none of these existing works  \cite{zhu2018towards,8733825,8664630,chen2019joint,8737464,8851408,vu2019cell,ren2019accelerating,8851249,feng2018joint,yang2019energy,skatchkovsky2019federated} considers the effect of the local FL models of the users that cannot connect to the BS due to limited wireless resources on the FL convergence time and {\color{black}training loss}.  {\color{black}In \cite{NIPS2018_7752}, the authors proposed a lazily aggregate gradient (LAG) algorithm for optimizing user selection and local FL model transmission. However, the LAG algorithm in \cite{NIPS2018_7752} is a deterministic user selection scheme \cite{NIPS2018_7752} in which some users may not get an opportunity to send their local FL models to the BS thus increasing the training loss}.

   
%
{\color{black}
The main contribution of this work is a novel framework for jointly minimizing the FL convergence time and the FL {training loss} as captured by the learning loss function.
  Our
key contributions include:

\begin{itemize}
  
\item We develop a realistic implementation of FL over a wireless network in which the users train their local FL models using their own data and transmit the trained local FL models to a base station (BS) over wireless links. The BS aggregates the received local FL models to generate a global FL model and send it back to the users. {Since the number of resource blocks (RBs) that are used for FL model transmission is limited, the BS must select an appropriate set of users to perform FL at each learning step. Meanwhile, the BS must allocate its RBs to its users so as to improve the convergence speed. 
 Hence, we formulate this joint user selection and RB allocation problem as an optimization problem whose goal is to minimize the number of iterations needed for FL convergence as well as the time duration of each iteration while optimizing the FL {training loss}, in terms of the accuracy.}

\item To solve this problem, we first propose a probabilistic user selection scheme in which users, whose local FL models have large effects on the global FL model, will have high probabilities to connect to the BS. {The reason that optimizing user selection can improve FL convergence speed is that the global model contains less information from the local datasets of some users. In this case, increasing the connectivity probability of these users enables the global model to acquire more information from these users thus improving the FL convergence speed.}
Moreover, the proposed user selection scheme guarantees that every user has a non-zero chance to connect to the BS and, hence, it enables the BS to learn all of the users' training data samples so as to guarantee that the FL algorithm achieves the optimal {training loss}. Given the user selection scheme, the optimal RB allocation scheme can be determined\footnote{{\color{black}Optimizing RB allocation can improve the users' data rates thus decreasing the local FL model transmission delay. In consequence, we can optimize RB allocation to decrease the FL convergence time.}}.

\item To further reduce the FL convergence time, we propose the use of artificial neural networks (ANNs) to find a relationship among the users' local FL model parameters. Building this relationship enables the BS to estimate the local FL model parameters of the users that cannot transmit their local FL models to the BS due to limited number of RBs at each given learning step. Hence, using ANNs, the BS can integrate more users' local FL model parameters to generate the global FL model and, hence, decrease the FL {training loss} and improve convergence speed. 
 
 \item We perform fundamental analysis on the expression of expected
convergence rate of the proposed FL algorithm and we show that, the FL training method (with full gradient descent or stochastic gradient descent \cite{amiri2019machine}), RB allocation, user selection scheme, and the accuracy of predicting local FL models of users
 will significantly affect the convergence speed and {training loss} of FL. 
 
\end{itemize}
Simulation results assess the various performance metrics of the proposed approach and show that the proposed FL
 approach can reduce the convergence time by up to $56\%$ and improve the FL {identification accuracy} by up to $3\%$, compared to a standard FL algorithm. To our best knowledge,  \emph{this is the first work that studies the use of ANNs for the prediction of FL model parameters that are used in the training process so as to improve the FL convergence speed and {training loss}.}}

%
%
%
The rest of this paper is organized as follows. The system model and problem formulation are described in Section \uppercase\expandafter{\romannumeral2}. The probabilistic user selection scheme, resource allocation scheme, and the use of ANNs for the prediction of users' local FL models are introduced in Section \uppercase\expandafter{\romannumeral3}. Simulation results are analyzed in Section \uppercase\expandafter{\romannumeral4}. Conclusions are drawn in Section \uppercase\expandafter{\romannumeral5}.

\section{System Model and Problem Formulation}\label{se:system}

\begin{table*}\footnotesize
{\color{black}
  \newcommand{\tabincell}[2]{\begin{tabular}{@{}#1@{}}#2\end{tabular}}
\renewcommand\arraystretch{0.9}
 \caption{
    \vspace*{-0.1em}List of notations}\vspace*{-0.5em} \label{ta:1}
\centering  
\begin{tabular}{|c||c|c||c|}
\hline
Notation & Description & Notation & Description\\
\hline
$\boldsymbol{g}$ & Global FL model & $K$ & Total number of training data samples of all users \\
\hline
$\boldsymbol{a}_{\mu}$& User selection vector at iteration $\mu$  & $K_i$ & Number of training data samples of each user $i$ \\
\hline
 $\lambda$ & Learning rate& $p_{i,\mu}$ & Probability of user $i$ connecting to the BS at iteration $\mu$\\
\hline
$R$ & Number of RBs &  $\boldsymbol{w}_{i,\mu}$ & Local FL model of user $i$ at iteration $\mu$ \\
\hline
 $\gamma$ & Prediction error requirement & $\boldsymbol{r}_{i,\mu}$ & RB allocation vector of user $i$ at iteration $\mu$ \\
\hline
$B$ & Bandwidth of each RB & $E_{i,\mu}$ & Prediction error of user $i$'s local FL model at iteration $\mu$ \\
\hline
 $\gamma_R$ & Number of users considered in (10)&$\boldsymbol{R}_{\mu}$& RB allocation matrix at iteration $\mu$ \\
\hline
$c_{i}^\textrm{D}$ & Downlink data rate of each user $i$ & $\boldsymbol{R}$& RB allocation matrix from iteration 1 to $T$\\
\hline
 $I_n$ & Interference over RB $n$&$\boldsymbol{A}$ & User selection matrix from iteration 1 to $T$\\
\hline
$B^\textrm{D}$ & Downlink bandwidth &$\boldsymbol{\hat w}_{i,\mu}$ & Predicted local FL model of user $i$ at iteration $\mu$ \\
\hline
  $Z$& Data size of local or global FL model & ${i}^*$& The user always connects to the BS  \\
\hline
$\Omega_\mu $ & Convergence indicator at iteration $\mu$ &  $T$ & Maximum number of iterations \\
\hline
\end{tabular}}
\end{table*}  

%


%
%
 \begin{figure}[!t]
  \begin{center}
   \vspace{0cm}
    \includegraphics[width=8cm]{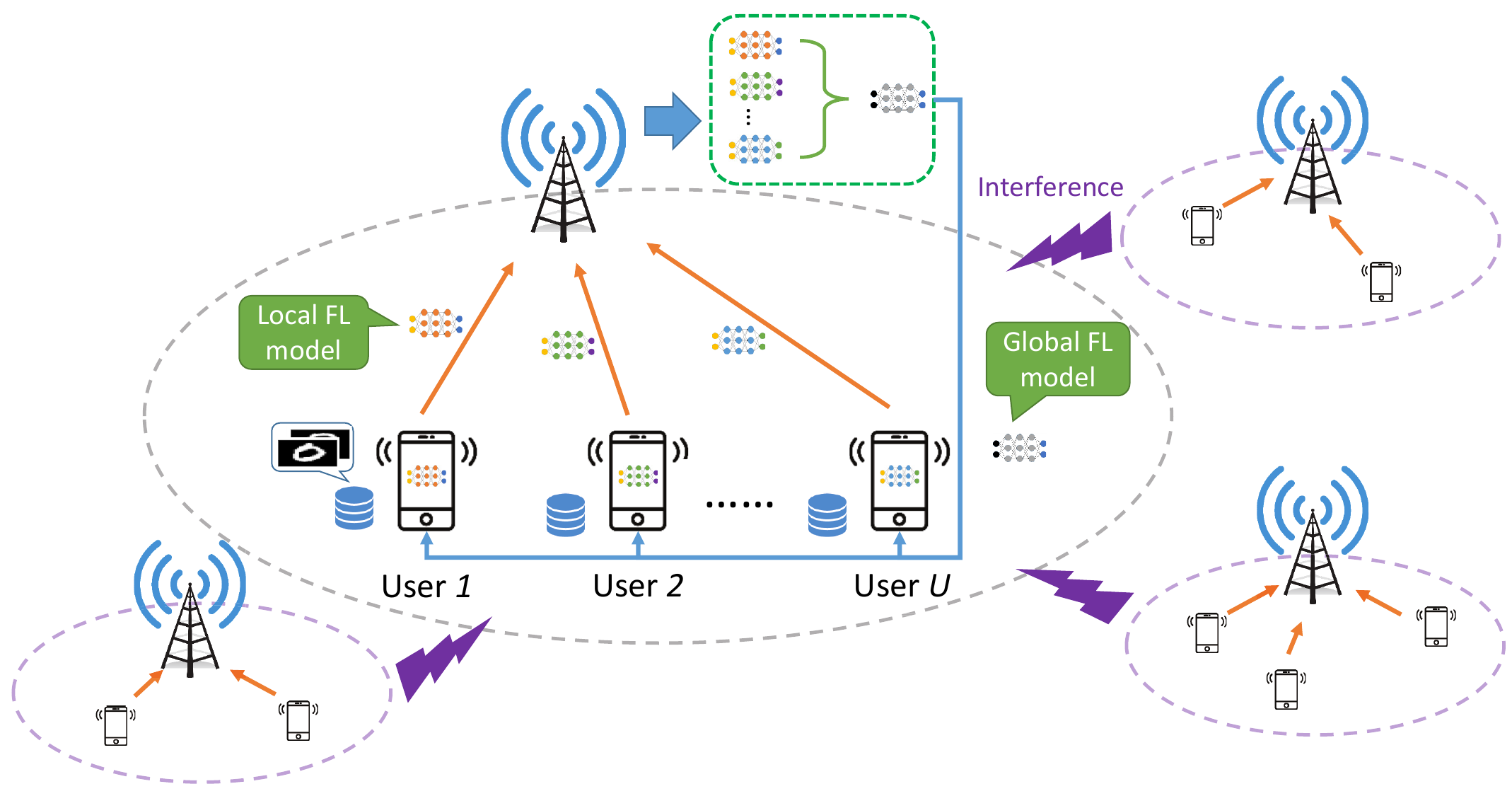}
    \vspace{-0.25cm}
 {\color{black}   \caption{\label{architecture}The architecture of an FL algorithm that is being executed over a wireless network with multiple devices and a single base station.}}
  \end{center}\vspace{-0.1cm}
\end{figure}

Consider a wireless network in which a set $\mathcal{U}$ of $U$ users and one BS jointly execute an FL algorithm for data analysis and inference, {\color{black}as shown in Fig. \ref{architecture}}. We assume that each user $i$ collects $K_i$ training data samples and each training data sample $k$ consists of an input $\boldsymbol{x}_{ik} \in \mathbb{R}^{N_\textrm{in}\times1} $ and its corresponding output $\boldsymbol{y}_{ik} \in \mathbb{R}^{N_\textrm{out}\times1}$.
We also assume that the data collected by the users follows the same distribution \cite{konevcny2016federated} or statistical heterogeneity \cite{sahu2018convergence}.
The FL training process is done in a way to solve
\begin{equation}\label{eq:ML}
\mathop {\min } \limits_{{\boldsymbol{g}}} \frac{1}{K}  \sum\limits_{i = 1}^U  \sum\limits_{k=1 }^{K_i} {f\left( {{\boldsymbol{g}},{\boldsymbol{x}_{ik}},{\boldsymbol{y}_{ik}}} \right)} ,
\end{equation}
where $K=\sum\limits_{i = 1}^U K_i$ is total number of training data samples of all users, $\boldsymbol{g}$ is a vector that captures the FL model trained by $K$ training data samples, and ${f\left( {{\boldsymbol{g}},{\boldsymbol{x}_{ik}},{\boldsymbol{y}_{ik}}} \right)}$ is a loss function that captures the accuracy of the considered FL algorithm by building a relationship between an input vector $\boldsymbol{x}_{ik}$ and an output vector $\boldsymbol{y}_{ik}$. For different learning tasks, the definition of the loss functions can be different \cite{8755300}. 
In particular, for our considered handwritten digit {\color{black}classification} task, the {\color{black}cross entropy} is ${f\left( {{\boldsymbol{g}},{\boldsymbol{x}_{ik}},{\boldsymbol{y}_{ik}}} \right)}=-\boldsymbol{y}_{ik} \log\left( \boldsymbol{x}_{ik}^{\rm{T}}{\boldsymbol{g}}  \right)+\left(1-\boldsymbol{y}_{ik}\right) \log\left( 1-\boldsymbol{x}_{ik}^{\rm{T}}{\boldsymbol{g}}  \right)$. To find the optimal vector $\boldsymbol{g}$, conventional centralized learning algortihms require the users to share their collected data with other users, which is impractical due to communication overhead and data privacy issues. To address these issues, FL can be used. The training procedure of FL proceeds as follows \cite{bonawitz2019towards}:    
 \begin{itemize}
\item[a.] The BS broadcasts the information that is used to initialize the learning model to each user. Then, each user initializes its learning model. 
\item[b.] Each user trains its generated machine learning model using its collected data and {\color{black}sends the trained learning model parameters to the BS.}
\item[c.] The BS integrates the received learning model parameters and broadcasts them back to all users.
\item[d.] Steps b. and c. are repeated until the optimal vector $\boldsymbol{g}$ is found.    
\end{itemize} 
From this FL training procedure, we observe that the users do not have to share their collected data with other users and the BS. In contrast, they only need to share their trained learning model parameters with the BS. Note that the implementation of steps b. and c. constitutes one iteration of training FL. Hereinafter, the FL model that is trained by each user using its collected data is called \emph{local FL model} while the FL model that is generated by the BS is called \emph{global FL model}. Since all of the FL model parameters are transmitted over wireless networks, we must consider the effect of {\color{black}wireless factors such as resource blocks, user selection, dynamic wireless channel, and transmission delay,} on the FL {\color{black}training loss and convergence time}.
Next, we first mathematically formulate the training procedure of FL. Then, the transmission of FL model parameters over wireless links is introduced. Finally, the problem of minimzing FL convergence time and {\color{black}training loss} is formulated. {\color{black}Table \uppercase\expandafter{\romannumeral1} provides a summary of the notations used hereinafter.}

\subsection{Training Procedure of Federated Learning}
The local FL model of each user $i$ at each iteration $\mu$ is defined as a vector $\boldsymbol{w}_{i,\mu} \in \mathbb{R}^{W\times1}$, where $W$ is the number of elements in $\boldsymbol{w}_{i,\mu}$. 
The update of the global FL model at iteration $\mu$ is given by \cite{45648}
\begin{equation}\label{eq:w}
\setlength{\abovedisplayskip}{2 pt}
\setlength{\belowdisplayskip}{2 pt}
\boldsymbol{g}\left(\boldsymbol{a}_{\mu} \right)=\sum\limits_{i = 1}^U{\boldsymbol{w}_{i,\mu} \left( \frac{a_{i,\mu} K_i}{\sum\limits_{i = 1}^U a_{i,\mu}K_i}\right) },
\end{equation}   
where $\boldsymbol{g}\left(\boldsymbol{a}_{\mu} \right)$ is the global FL model at iteration $\mu$, $\frac{a_{i,\mu} K_i}{\sum\limits_{i = 1}^U a_{i,\mu}K_i}$ is a scaling update weight of $\boldsymbol{w}_{i,\mu}$ and $\boldsymbol{a}_\mu=\left[a_{1,\mu},\ldots,a_{U,\mu}\right]$ is a user association vector with $a_{i,\mu} \in \left\{0,1 \right\}$, $a_{i,\mu}=1$ indicates that user $i$ connects to the BS and user $i$ must send its local FL model $\boldsymbol{w}_{i,\mu}$ to the BS at iteration $\mu$, otherwise, we have $a_{i,\mu}=0$. From (\ref{eq:w}), we see that the user association vector $\boldsymbol{a}_\mu$ will change at each iteration. This is due to the fact that, in wireless networks, the number of RBs is limited and the BS must select a suitable set of users to transmit their local FL models in each iteration.

The update of the local FL model $\boldsymbol{w}_{i,\mu}$ depends on the used training method. In our model, the full gradient descent method  \cite{konevcny2016federated} is used to update the local FL model. In Section~\ref{se:CIC}, we revisit our analysis when FL is executed with a stochastic gradient descent method to update the local FL model. Using the full gradient descent method, the update of local FL model can be given by
\begin{equation}\label{eq:g}
\boldsymbol{w}_{i,\mu+1}=\boldsymbol{ g}\left(\boldsymbol{a}_{\mu}\right) -\frac{\lambda}{K_i} \sum\limits_{k =1 }^{K_i} \nabla {f\left( { \boldsymbol{ g}\left(\boldsymbol{a}_\mu\right) ,{\boldsymbol{x}_{ik}},{\boldsymbol{y}_{ik}}} \right)},
\end{equation}  
where $\lambda$ is the learning rate and $\nabla {f\left( { \boldsymbol{ g}\left(\boldsymbol{a}_\mu\right) ,{\boldsymbol{x}_{ik}},{\boldsymbol{y}_{ik}}} \right)}$ is the gradient of ${f\left( {\boldsymbol{ g}\left(\boldsymbol{a}_\mu\right) ,{\boldsymbol{x}_{ik}},{\boldsymbol{y}_{ik}}} \right)}$ with respect to $\boldsymbol{ g}\left(\boldsymbol{a}_\mu\right) $.  Based on (\ref{eq:w}) and (\ref{eq:g}), the BS and the users can update their FL models so as to find the optimal global FL model that solves problem (\ref{eq:ML}).


\subsection{Transmission Model}
We assume that an orthogonal frequency-division multiple access (OFDMA) technique is adopted for local FL model transmission from the users to the BS. 
In our model, we assume that the total number of RBs that the BS can allocate to its users is $R$, and each user can occupy only one RB.
The uplink rate of user $i$ that is transmitting its local FL model parameters to the BS at iteration $\mu$ is given by{\color{black}\cite{6497017}}
\begin{equation}\label{eq:uplinkdatarate}
c_{i}^\textrm{U} \left(\boldsymbol{r}_{i,\mu}\right)=  \sum\limits_{n = 1}^R r_{in,\mu} B{\log _2}\left(\!1\!+\! {\frac{{{P}{h_{i}}}}{I_n+BN_0}} \!\right),
\end{equation}
 where $\boldsymbol{r}_{i,\mu}=\left[{r}_{i1,\mu},\ldots, {r}_{iR,\mu}\right]$ is an RB allocation vector with $r_{in,\mu} \in \left\{0,1\right\}$; $r_{in,\mu}=1$ indicates that RB $n$ is allocated to user $i$ at iteration $\mu$, otherwise, we have $r_{in,\mu}=0$, and
 $P$ is the transmit power of user $i$, which is assumed to be equal for all users. 
$h_i$ is the channel gain between user $i$ and the BS, 
$N_0$ is the noise power spectral density, and $I_n$ is the interference caused by the users that connect to other BSs using the same RB. Since the interference caused by the users using the same RBs may significantly affect the transmission delay and the time needed for FL to converge, we must consider this interference in (\ref{eq:uplinkdatarate}).

 Similarly, the downlink data rate of the BS when transmitting the global FL model parameters to each user $i$ is given by{\color{black}\cite{6497017}}
 \begin{equation}\label{eq:downlinkdatarate}
 \setlength{\abovedisplayskip}{2 pt}
\setlength{\belowdisplayskip}{2 pt}
c_{i}^\textrm{D} =  B^\textrm{D}{\log _2}\left(\!1\!+\! {\frac{{{P_{B}}{ h_{i}}}}{ B^\textrm{D}N_0}} \!\right),
\end{equation}
where $B^\textrm{D}$ is the bandwidth that the BS used to broadcast the global FL model to each user $i$ and $P_B$ is the transmit power of the BS.

Since the number of elements in the local FL model $\boldsymbol{w}_{i,\mu}$ is similar to that of the global FL model $\boldsymbol{g}\left( \boldsymbol{a}_\mu \right)$, the data size of the local FL model $\boldsymbol{w}_{i,\mu}$ is equal to the data size of the global FL model $\boldsymbol{g}\left( \boldsymbol{a}_\mu\right)$. Here, the data size of the local FL model is defined as the number of bits that the user requires to transmit the local FL model vector $\boldsymbol{w}_{i,\mu}$ to the BS.  Let $Z$ be the data size of a global FL model or local FL model.
The transmission delay between user $i$ and the BS over both uplink and downlink at iteration $\mu$ will be
\begin{equation}
l_{i}^\textrm{U}\left(\boldsymbol{r}_{i,\mu}\right) =\frac{Z }{c_{i}^\textrm{U} \left(\boldsymbol{r}_{i,\mu}\right)    },
\end{equation}
\begin{equation}
l_{i}^\textrm{D}=\frac{Z}{c_{i}^\textrm{D}}.
\end{equation}
The time that the users and the BS require to jointly complete an update of their respective local and global FL models at iteration $\mu$ is given by
\begin{equation}\label{eq:tmu}
t_\mu\left(\boldsymbol{a_{\mu}}, \boldsymbol{R}_\mu\right) =\mathop {\max }\limits_{i \in \mathcal{U}}   {{a_{i,\mu}}\left(l_{i}^\textrm{U}\left(\boldsymbol{r}_{i,\mu}\right) +l_{i}^\textrm{D} \right)},
\end{equation}
where $\boldsymbol{R}_\mu=\left[\boldsymbol{r}_{1,\mu},\ldots,\boldsymbol{r}_{U,\mu} \right]$. When $a_{i,\mu}=0$, $ {{a_{i,\mu}}\left(l_{i}^\textrm{U}\left(\boldsymbol{r}_{i,\mu}\right) +l_{i}^\textrm{D} \right)}=0$. Here, $a_{i,\mu}=0$ implies that user $i$ will not send its local FL model to the BS at iteration $\mu$, and hence, user $i$ will not cause any delay at iteration $\mu$, ($ {{a_{i,\mu}}\left(l_{i}^\textrm{U}\left(\boldsymbol{r}_{i,\mu}\right) +l_{i}^\textrm{D} \right)}=0$). When $a_{i,\mu}$=1, then user $i$ will transmit its local FL model to the BS and the transmission delay will be $ {l_{i}^\textrm{U}\left(\boldsymbol{r}_{i,\mu}\right) +l_{i}^\textrm{D}}$. Hence, (\ref{eq:tmu}) is essentially the worst-case transmission delay among all selected users.

\subsection{Problem Formulation}
Having defined the system model, the next step is to introduce a joint RB allocation and user selection scheme to minimize the time that the users and the BS need in order to complete the FL training process. This optimization problem is formulated as follows:
\addtocounter{equation}{0}
\begin{equation}\label{eq:max}
\begin{split}
\mathop {\min }\limits_{\boldsymbol{A}, \boldsymbol{R}}  \sum\limits_{\mu=1}^{T} t_\mu\left(\boldsymbol{a_{\mu}}, \boldsymbol{R}_\mu\right)\Omega_\mu
\end{split}
\end{equation}
\vspace{-0.3cm}
\begin{align}\label{c1}
\setlength{\abovedisplayskip}{-20 pt}
\setlength{\belowdisplayskip}{-20 pt}
&\!\!\!\!\!\!\!\!\rm{s.\;t.}\;\scalebox{1}{$a_{i,\mu}, r_{in,\mu}, \Omega_\mu \in \left\{0,1\right\}, \;\;\;\;\;\forall i \in \mathcal{U}, n=1,\ldots, R,$}\tag{\theequation a}\\
&\scalebox{1}{$\;\;\; \sum\limits_{i \in \mathcal{U}} {r_{i,n}}  \le 1,~\forall n=1,\ldots, R, $} \tag{\theequation b}\\
&\scalebox{1}{$\;\;\; \sum\limits_{n = 1}^R r_{in,\mu}=a_{i,\mu},\;\;\forall i \in \mathcal{U}, $} \tag{\theequation c}
\end{align}
where $\boldsymbol{A}=\left[\boldsymbol{a}_1,\ldots, \boldsymbol{a}_\mu, \ldots, \boldsymbol{a}_T\right]$ is a user selection matrix for all iterations, $\boldsymbol{R}=\left[\boldsymbol{R}_{1},\ldots, \boldsymbol{R}_{\mu}, \ldots, \boldsymbol{R}_{T} \right]$ is an RB allocation matrix for all users at all iterations, and $T$ is a constant, which is large enough to guarantee the convergence of FL. In other words, the number of iterations that the FL algorithm requires to converge will not be larger than $T$. In (\ref{eq:max}), 
$\Omega_\mu=1$ implies that the FL algorithm does not converge, otherwise, we have $\Omega_\mu=0$, (\ref{eq:max}a) and (\ref{eq:max}b) imply that each user can only occupy one RB, and (\ref{eq:max}c) implies that all RBs must be allocated to the users that are associated with the BS. From (\ref{eq:max}), we see that the time used for the update of the local and global FL models, $t_\mu$, depends on the user selection vector $\boldsymbol{a}_\mu$ and RB allocation matrix $\boldsymbol{R}_{\mu}$. Meanwhile, as shown in (\ref{eq:g}), the total number of iterations that the FL algorithm needs in order to converge depends on the user selection vector $\boldsymbol{a}_\mu$. In consequence, the time duration of each FL training iteration and the number of iterations needed for the FL algorithm to converge are dependent. Moreover, given the global model $\boldsymbol{g}\left(\boldsymbol{a}_\mu\right)$ at iteration $\mu$, the BS cannot calculate the number of iterations that the FL algorithm needs to converge since all of the training data samples are located at the users' devices.
Hence, problem (\ref{eq:max}) is challenging to solve.

\section{Minimization of Convergence Time of Federated Learning}\label{se:algorithm}
To solve problem (\ref{eq:max}), we first need to determine the user association at each iteration. Given the user selection vector, the optimal RB allocation scheme can be derived. To further improve the convergence speed of FL, ANNs are introduced to estimate the local FL models of the users that are not allocated any RBs for transmitting their local model parameters at each given learning step. Fig. \ref{flow} summarizes the training procedure of the proposed FL.

\begin{figure}[!t]
  \begin{center}
   \vspace{0cm}
    \includegraphics[width=8cm]{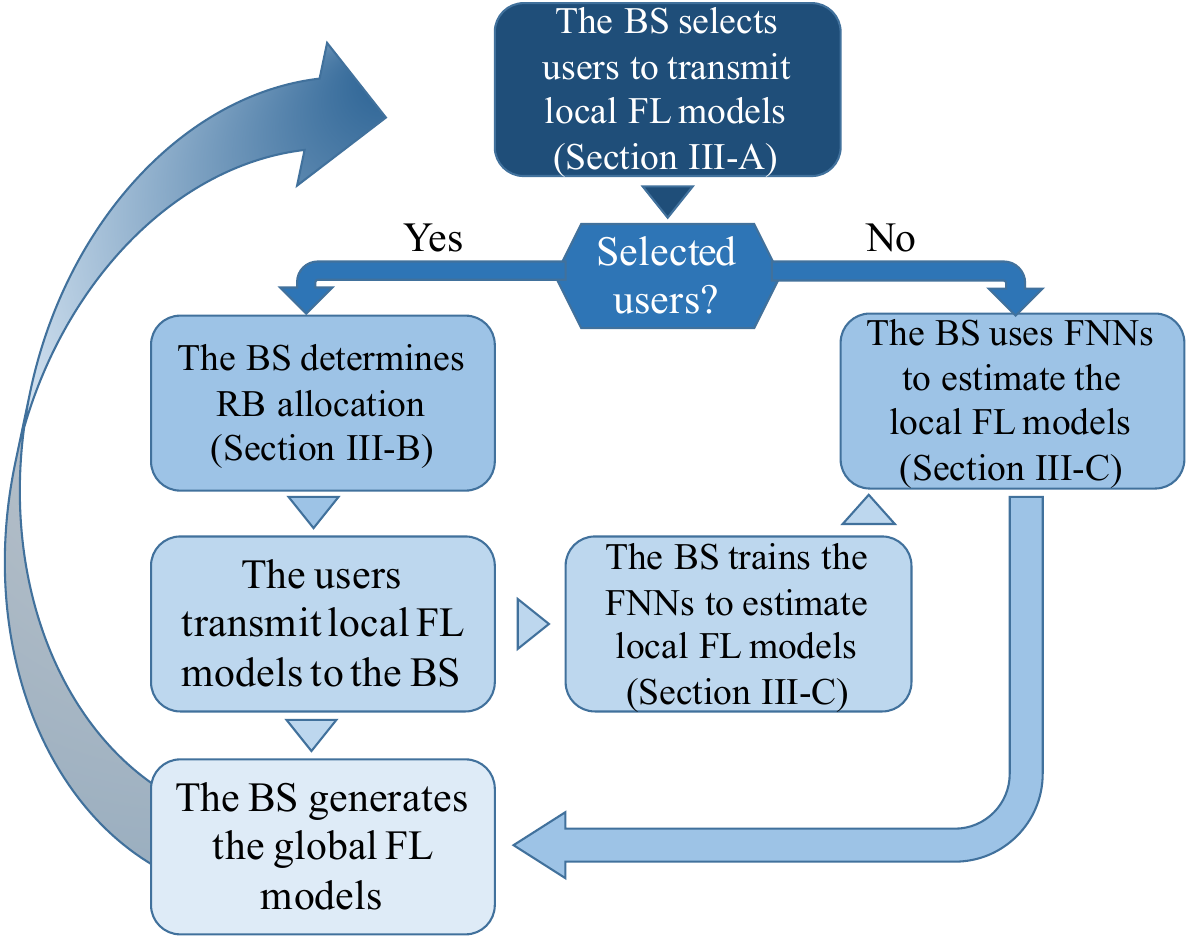}
    \vspace{-0.25cm}
    \caption{\label{flow} The training procedure of the proposed FL.}
  \end{center}\vspace{-0.5cm}
\end{figure}

\subsection{Gradient Based User Association Scheme}\label{se:userassociation}
To predict the local FL model of each user, the BS needs to use the local FL model of a given user as an ANN input, as will be explained in Subsection \ref{se:prediction}. Hence, in the proposed user association scheme, one user must be selected to connect with the BS during all training iterations. 
 To determine the user that connects to the BS during the entire training process, we first assume that the distance $d_i$ between user $i$ and the BS satisfies $d_1\le d_2 \le \ldots \le d_U$. Hence, user $i^*$ that always connects to the BS can be found from
\begin{equation}\label{eq:i}
i^*=\arg\mathop {\max }\limits_{i \in \mathcal{U} }  \sum\limits_{k =1 }^{K_i} \nabla  {f\left( { \boldsymbol{ g}\left(\boldsymbol{a}_{\mu-1}\right) ,{\boldsymbol{x}_{ik}},{\boldsymbol{y}_{ik}}} \right)},
\end{equation}
\begin{align}\label{c1}
\setlength{\abovedisplayskip}{-20 pt}
\setlength{\belowdisplayskip}{-20 pt}
&\!\!\!\!\!\!\!\!\rm{s.\;t.}\;\scalebox{1}{$d_1\le d_i \le d_{\gamma_R}, \forall i \in \mathcal{U}, $}\tag{\theequation a}
\end{align}
where {\color{black} $\gamma_R$ is the number of users considered in (\ref{eq:i}) with $1 \le \gamma_R \le U$.} As $\gamma_R$ increases, the number of users considered in (\ref{eq:i}) increases. Hence, the transmission delay of user $i^*$ may increase thus increasing the time used to complete one FL iteration. However, as $\gamma_R$ increases, the value of $\mathop {\max }\limits_{i \in \mathcal{U} }  \sum\limits_{k =1 }^{K_i} \nabla  {f\left( { \boldsymbol{ g}\left(\boldsymbol{a}_{\mu-1}\right) ,{\boldsymbol{x}_{ik}},{\boldsymbol{y}_{ik}}} \right)}$ may increase, and, thus, the number of iterations required for FL to converge decreases. Here, user $i^*$ is determined at the first iteration.

From (\ref{eq:g}), we can observe that, at each iteration $\mu$, the global FL model $\boldsymbol{g}\left(\boldsymbol{a}_{\mu-1}\right)$ will change ${\lambda} \sum\limits_{k =1 }^{K_i} \nabla {f\left( { \boldsymbol{ g}\left(\boldsymbol{a}_{\mu-1}\right) ,{\boldsymbol{x}_{ik}},{\boldsymbol{y}_{ik}}} \right)}$ due to the local FL model of a given user $i$. We define a vector $\boldsymbol{e}_{i,\mu}={\lambda} \sum\limits_{k =1 }^{K_i} \nabla {f\left( { \boldsymbol{ g}\left(\boldsymbol{a}_{\mu-1}\right) ,{\boldsymbol{x}_{ik}},{\boldsymbol{y}_{ik}}} \right)}$ as the change in the global FL model due to user $i$'s local FL model. To enable the BS to predict each user's local FL model at each learning step, each user must have a chance to connect to the BS so as to provide the training data samples (local FL model parameters) to the BS for training ANNs. Therefore, a probabilistic user association scheme is developed, which is given by
\begin{equation}\label{eq:probability}
{p_{i,\mu}} = \left\{ {\begin{array}{*{20}{c}}
  \frac{\left\|\boldsymbol{e}_{i,\mu}\right\|}{\sum\limits_{i=1,i\neq i^*}^{U}\left\|\boldsymbol{e}_{i,\mu}\right\|},~\textrm{if}~i \neq i^*, \\ 
\;\;\;\; \;\;\;\;\;\;\; \;\;\;1,\;\;\;\;\;\;\;\;~\textrm{if}~i=i^*,\;\;\;\;\;\;  
\end{array}} \right.
\end{equation}
where $p_{i,\mu}$ represents the probability that user $i$ connects to the BS at iteration $\mu$, and $\left\|\boldsymbol{e}_{i,\mu}\right\|$ is the norm of vector $\boldsymbol{e}_{i,\mu}$. From (\ref{eq:probability}),  we can see that, as $\left\|\boldsymbol{e}_{i,\mu}\right\|$ increases, the probability of associating user $i$ with the BS increases. In consequence, the probability that the BS uses user $i$'s local FL model to generate the global FL model increases. Hence, using the proposed user association scheme in (\ref{eq:probability}), the BS has a high probability to connect to the user whose local FL model significantly affects the global FL model, thus improving the FL convergence speed.
From (\ref{eq:probability}), we also see that user $i^*$ will always connect to the BS so as to provide information for the prediction of other users' local FL models. Based on (\ref{eq:probability}), the user association scheme at each iteration can be determined. To calculate ${p_{i,\mu}}$ in (\ref{eq:probability}), the BS only needs to know $\left\|\boldsymbol{e}_{i,\mu}\right\|$ of each user $i$ without requiring the exact training data information. In fact, $\left\|\boldsymbol{e}_{i,\mu}\right\|$ can be directly calculated by user $i$ and each user $i$ needs to transmit only a scalar $\left\|\boldsymbol{e}_{i,\mu}\right\|$ to the BS.

\subsection{Optimal RB Allocation Scheme}  
Given the user association scheme at each iteration $\mu$, problem (\ref{eq:max}) at iteration $\mu$ can be simplified as follows:
\addtocounter{equation}{0}
\begin{equation}\label{eq:RB}
\begin{split}
\mathop {\min }\limits_{\boldsymbol{R}_{\mu}} t_\mu \left( \boldsymbol{R}_\mu\right)=\mathop {\min }\limits_{\boldsymbol{R}_{\mu}}\mathop {\max }\limits_{i \in \mathcal{U}}   {{a_{i,\mu}}\left( l_{i}^\textrm{U}\left(\boldsymbol{r}_{i,\mu}\right) +l_{i}^\textrm{D} \right)}
\end{split}
\end{equation}
\vspace{-0.3cm}
\begin{align}\label{c1}
\setlength{\abovedisplayskip}{-20 pt}
\setlength{\belowdisplayskip}{-20 pt}
&\!\!\!\!\!\!\!\!\rm{s.\;t.}\;\scalebox{1}{$r_{in,\mu} \in \left\{0,1\right\}, \;\;\;\;\;\forall i \in \mathcal{U}, n=1,\ldots, R,$}\tag{\theequation a}\\
&\scalebox{1}{$\;\;\; \sum\limits_{i \in \mathcal{U}} {r_{in,\mu}}  \le 1,~\forall n=1,\ldots, R, $} \tag{\theequation b}\\
&\scalebox{1}{$\;\;\; \sum\limits_{n = 1}^R r_{in,\mu}=a_{i,\mu},\;\;\forall i \in \mathcal{U}. $} \tag{\theequation c}
\end{align}
We assume that there exists a variable $m$ that satisfies ${{a_{i,\mu}}\left(l_{i}^\textrm{U}\left(\boldsymbol{r}_{i,\mu}\right) +l_{i}^\textrm{D} \right)}  \leqslant m $. Problem (\ref{eq:RB}) can be simplified as follows:
\addtocounter{equation}{0}
\begin{equation}\label{eq:RB1}
\begin{split}
\mathop {\min }\limits_{\boldsymbol{R}_{\mu}}m
\end{split}
\end{equation}
\vspace{-0.3cm}
\begin{align}\label{c1}
\setlength{\abovedisplayskip}{-20 pt}
\setlength{\belowdisplayskip}{-20 pt}
&\!\!\!\!\!\!\!\!\rm{s.\;t.}\;\scalebox{1}{(\ref{eq:RB}a), (\ref{eq:RB}b), and (\ref{eq:RB}c),}\tag{\theequation a}\\
&\scalebox{1}{$\;\;\;m \geqslant  {{a_{i,\mu}}\left(l_{i}^\textrm{U}\left(\boldsymbol{r}_{i,\mu}\right) +l_{i}^\textrm{D} \right)}, \forall i \in \mathcal{U}. $} \tag{\theequation b}
\end{align}
Since (\ref{eq:RB1}b) is nonlinear, we must transform it into a linear constraint. We first assume that $l_{in,\mu}^\textrm{U}=\frac{Z }{B{\log _2}\left(\!1\!+\! {\frac{{{P}{h_{i}}}}{I_n+BN_0}} \!\right)}$, which represents the delay of user $i$ transmitting the local FL model over RB $n$ at iteration $\mu$. Then, we have $l_{i}^\textrm{U}\left(\boldsymbol{r}_{i,\mu}\right)=\sum\limits_{n=1}^R r_{in,\mu}l_{in,\mu}^\textrm{U}$. Hence, problem (\ref{eq:RB1}) can be rewritten as follows:
\addtocounter{equation}{0}
\begin{equation}\label{eq:RB2}
\begin{split}
\mathop {\min }\limits_{\boldsymbol{R}_{\mu}}m
\end{split}
\end{equation}
\vspace{-0.3cm}
\begin{align}\label{c1}
\setlength{\abovedisplayskip}{-20 pt}
\setlength{\belowdisplayskip}{-20 pt}
&\!\!\!\!\!\!\!\!\rm{s.\;t.}\;\scalebox{1}{(\ref{eq:RB}a), (\ref{eq:RB}b), and (\ref{eq:RB}c),}\tag{\theequation a}\\
&\scalebox{1}{$\;\;\;m \geqslant  {{a_{i,\mu}}\left(\sum\limits_{n=1}^R r_{in,\mu}l_{in,\mu}^\textrm{U} +l_{i}^\textrm{D} \right)}, \forall i \in \mathcal{U}. $} \tag{\theequation b}
\end{align}
Problem (\ref{eq:RB2}) is equivalent to (\ref{eq:RB1}) and is an integer linear programming problem, which can be solved by known optimization algorithms such as interior-point methods \cite{10556789808805699}. 

\subsection{Prediction of the Local FL Models}\label{se:prediction}
The previous subsections determine the users that are associated with the BS at each iteration and minimize their transmission delay by optimizing the RB allocation. Next,
we introduce an ANN-based algorithm to predict the local FL model parameters of the users that are not allocated any RBs for local FL model transmission at each given learning step. In particular, ANNs are used to build a relationship between the local FL models of different users. Since {\color{black}fully-connected multilayer perceptron (MLP) in ANNs} are good at function fitting tasks and finding a relationship among different users' local FL models is a {\color{black}regression} task, we prefer to use {\color{black}MLP} instead of other neural networks such as recurrent neural networks. Next, we first introduce the architecture of our FNN-based algorithm. Then, we explain how to implement this algorithm to predict the local FL model parameters of the users at each given learning step.
  
 Our FNN-based prediction algorithm consists of three components: a) input, b) a single hidden layer, and c) output, which will be defined as follows:
  \begin{itemize}
\item \textbf{Input}: The input of the FNN that is used for the prediction of user $j$'s local FL model is a vector $\boldsymbol{w}_{i^*,\mu}$, which represents the local FL model of user $i^*$. As we mentioned in Subsection \ref{se:userassociation}, user $i^*$ will always connect with the BS so as to provide the input information for the {\color{black}MLP} to predict the local FL models of other users.  
\item \textbf{Output}: The output of the FNN for the prediction of user $j$'s local FL model is a vector $\boldsymbol{o}=\boldsymbol{w}_{i^*,\mu}-\boldsymbol{w}_{j,\mu}$, which represents the difference between user $i^*$'s local FL model and user $j$'s local FL model. Based on the prediction output $\boldsymbol{o}$ and user $i^*$'s local FL model, we can obtain the local FL model of user $j$, i.e., $\boldsymbol{\hat w}_{j,\mu}=\boldsymbol{w}_{i^*,\mu}- \boldsymbol{o}$ with $\boldsymbol{\hat w}_{j,\mu}$ being the predicted user $j$'s local FL model.

\item \textbf{A single hidden layer}: The hidden layer of an FNN allows it to learn nonlinear relationships between input vector $\boldsymbol{w}_{i^*,\mu}$ and output vector $\boldsymbol{o}$. Mathematically, a single hidden layer consists of $N$ neurons. The weight matrix that represents the connection strength between the input vector and the neurons in the hidden layer is $\boldsymbol{v}^\textrm{in} \in \mathbb{R}^{N\times W}$. Meanwhile, the weight matrix that captures the strengths of the connections between the neurons in the hidden layer and the output vector is $\boldsymbol{v}^\textrm{out} \in \mathbb{N}^{W\times N}$.     
\end{itemize}

Given the components of the {\color{black}MLP}, next, we introduce the use of {\color{black}MLP} to predict each user's local FL model. The states of the neurons in the hidden layer are given by
\begin{equation}
\boldsymbol{\vartheta}=\sigma\left(\boldsymbol{v}^\textrm{in}\boldsymbol{w}_{i^*,\mu}+\boldsymbol{b}_\vartheta \right),
\end{equation}
where $\sigma\left(\boldsymbol{x}\right)=\frac{2}{{1 + \exp \left( { - 2\boldsymbol{x}} \right)}} - 1$ and $\boldsymbol{b}_\vartheta \in \mathbb{R}^{N \times 1}$ is the bias. Given the neuron states, we can obtain the output of the FNN, as follows:
\begin{equation}\label{eq:output}
\boldsymbol{o}=\boldsymbol{v}^\textrm{out}\boldsymbol{\vartheta}+\boldsymbol{b}_o,
\end{equation}
where $\boldsymbol{b}_o  \in \mathbb{R}^{W \times 1} $ is a vector of bias. Based on (\ref{eq:output}), we can calculate the predicted local FL model of each user $j$ at each iteration $\mu$,  i.e., $\boldsymbol{\hat w}_{j,\mu}=\boldsymbol{w}_{i^*,\mu}- \boldsymbol{o}$. To enable the FNN to predict each user's local FL model, the FNN must be trained by the online gradient descent method \cite{wu2011convergence}. 

Given the prediction of the users' FL models, the update of the global FL model can be rewritten by
\begin{equation}\label{eq:w2}
\boldsymbol{g}\left(\boldsymbol{a}_{\mu} \right)={\frac{ \sum\limits_{i =1}^{U} K_ia_{i,\mu} \boldsymbol{w}_{i,\mu}+{\sum\limits_{i=1}^U K_i\left(1-a_{i,\mu}\right) \boldsymbol{\hat w}_{i,\mu}}\mathbbm{1}_{\left\{ E_{i,\mu}\le \gamma \right\}}      }{\sum\limits_{i=1}^U K_ia_{i,\mu}+{\sum\limits_{i=1}^U K_i\left(1- a_{i,\mu} \right)}\mathbbm{1}_{\left\{ E_{i,\mu}\le \gamma \right\}}   }},
\end{equation}   
where $\sum\limits_{i =1}^{U} K_ia_{i,\mu} \boldsymbol{w}_{i,\mu}$ is the sum of the local FL models of the users that connect to the BS at iteration $\mu$ and ${\sum\limits_{i=1}^U K_i\left(1-a_{i,\mu}\right) \boldsymbol{\hat w}_{i,\mu}}\mathbbm{1}_{\left\{ E_{i,\mu}\le \gamma \right\}} $ is the sum of the predicted local FL models of the users that are not associated with the BS at iteration $\mu$, $E_{i,\mu}=\frac{1}{2W}\left\|\boldsymbol{\hat w}_{i,\mu}-\boldsymbol{ w}_{i,\mu}\right\|^2$ is the prediction error at iteration $\mu$, and $\gamma$ is the prediction requirement. In (\ref{eq:w2}), when the prediction accuracy of the FNN cannot meet the prediction requirement (i.e., $E_{i,\mu}>\gamma$), the BS will not use the prediction result for updating its global FL model. 
From (\ref{eq:w2}), we can also observe that, using {\color{black}MLP}, the BS can include additional local FL models to generate the global FL model so as to decrease  the FL {\color{black}training loss} and improve FL convergence speed. (\ref{eq:w2}) is used to generate the global FL model in Step c. of the FL training procedure specified in Section II.

The proposed FL algorithm that jointly minimizes the FL convergence time FL {\color{black}training loss} is shown in Algorithm \ref{algorithm}. From Algorithm \ref{algorithm}, we can see that the user selection and RB allocation are optimized at each FL iteration and, hence problem (\ref{eq:max}) is solved at each FL iteration. 
\begin{algorithm}[t]\footnotesize
\caption{Proposed FL over wireless network}   
\label{algorithm}   
\begin{algorithmic}[1] 
\vspace{1pt}  
\ENSURE Local FL model of each user $i$, $\boldsymbol{w}_i$, FNN model for the prediction of local FL model, $\boldsymbol{v}^\textrm{in}, \boldsymbol{v}^\textrm{out}, \boldsymbol{b}_\vartheta, \boldsymbol{b}_o$, user $i^*$ that always connects to the BS. \\ 
\vspace{1pt}  
\FOR {iteration $\mu$}
\vspace{1pt}
\STATE  Each user $i$ trains its local FL model to obtain $\boldsymbol{w}_{i,\mu}$.    
\vspace{1pt}  
\STATE Each user $i$ calculates the change of gradient of the local FL model $\boldsymbol{e}_{i,\mu}$ and sends $\left|\boldsymbol{e}_{i,\mu}\right|$ to the BS.   
\vspace{1pt}  
\STATE The BS calculates $p_{i,\mu}$ using (\ref{eq:probability}) and determines $\boldsymbol{a}_\mu$.  
\vspace{1pt} 
\STATE The BS determines $\boldsymbol{R}_\mu$ in (\ref{eq:RB}). 
\vspace{1pt} 
\STATE The selected users ($a_{i,\mu}=1$) transmit their local FL models to the BS based on $\boldsymbol{R}_\mu$ and $\boldsymbol{a}_\mu$.
\vspace{1pt} 
\STATE The BS uses {\color{black}MLP} to estimate the local FL models of the users who are not associated with the BS ($a_{i,\mu}=0$).
\vspace{1pt} 
\STATE The BS calculates the global FL model $\boldsymbol{g}$ using (\ref{eq:w2}).
  \vspace{1pt} 
\STATE The BS uses collected local FL models to train the {\color{black}MLP}.
\ENDFOR  
\end{algorithmic}
\end{algorithm}  
 
\subsection{Convergence, Implementation, and Complexity Analysis}\label{se:CIC}
\subsubsection{Convergence Analysis}Next, we analyze the convergence of the proposed FL algorithm.  We first assume that $F\left(\boldsymbol{g}_\mu\right)=\frac{1}{K} \sum\limits_{i=1}^{U}  \sum\limits_{k =1 }^{K_i}{f\left( {\boldsymbol{g}_\mu,{\boldsymbol{x}_{ik}},{{y}_{ik}}} \right)}$ and $F_i\left(\boldsymbol{g}_\mu\right)=\frac{1}{K_i}\sum\limits_{k =1 }^{K_i}{f\left( {\boldsymbol{g}_\mu,{\boldsymbol{x}_{ik}},{{y}_{ik}}} \right)}$ where $\boldsymbol{g}_\mu$ is short for $\boldsymbol{g}\left(\boldsymbol{a}_\mu\right)$. We also assume that $\| \nabla F_i\left(\boldsymbol{g}_\mu\right) \|^2 \le\zeta_{\mu}^1+\zeta_{\mu}^2 \| \nabla F \left(\boldsymbol{g}_{\mu} \right)\|^2$ and $\| \nabla \hat F_i\left(\boldsymbol{g}_{\mu} \right)\| = \varsigma_{\mu}^1+\varsigma_{\mu}^2 \| \nabla F\left(\boldsymbol{g}_{\mu}\right)\|$ where $\| \nabla \hat F_i\left(\boldsymbol{g}_{\mu}\right)\|$ is the gradient deviation that is caused by the prediction inaccuracy of user $i$'s local FL model. We also assume that $\boldsymbol{g}^*$ is the optimal global FL model that is achieved by the FL algorithm that can collect all users' local FL models at each iteration.
  The convergence of the proposed FL algorithm at each iteration $\mu$ is given by the following theorem.
 
 \begin{theorem}\label{th:1}
\emph{Given the optimal global FL model $\boldsymbol{g}^*$, the gradient deviation caused by the prediction inaccuracy of users' local FL models, $\| \nabla \hat F_i\left(\boldsymbol{g}_\mu\right) \|^2$, and the learning rate $\lambda=\frac{1}{L}$, the upper bound of $\mathbb E  \left[F\left(\boldsymbol{g}_{\mu+1}\right) -  F\left(\boldsymbol{g}^*\right)\right]$ can be given by
\begin{equation}\label{eq:th1}
\begin{split}
\mathbb E\left[ F\left(\boldsymbol{g}_{\mu+1}\right)-F\left(\boldsymbol{g}^{*}\right)\right]  \leq \varpi^1_\mu+\varpi^2_\mu \mathbb E\left(F \left(\boldsymbol{g}_{\mu} \right)-F\left(\boldsymbol{g}^{*}\right)\right).
\end{split}
\end{equation}
where $\varpi^1_\mu=\frac{\zeta_{\mu}^1\left(9K- 8\mathbb E\left(A\right)\right)}{2LK}$ determines the global FL model at convergence and convergence speed with $\mathbb E\left(A\right)={\sum\limits_{i=1}^U K_i\left(1-p_{i,\mu}\right)\mathbbm{1}_{\left\{ E_{i,\mu}\le \gamma \right\}}+\sum\limits_{i=1}^U K_ip_{i,\mu}}$ and
$\varpi^2_\mu=\left(1-\frac{\vartheta}{L}+\frac{\vartheta\zeta_{\mu}^2\left(9K- 8\mathbb E\left(A\right)\right)}{LK}\right)$ also affects the convergence speed. 
}
\end{theorem}
\begin{proof} See Appendix A.
\end{proof}

From Theorem \ref{th:1}, we can see that as $ \varpi^1_\mu$ decreases, the gap between $\boldsymbol{g}_{\mu}$ and $\boldsymbol{g}^*$ decreases. In particular, as $ \varpi^1_\mu=0$, the proposed FL algorithm will converge to the optimal FL model $\boldsymbol{g}^*$ and achieve the optimal {\color{black}training loss}. Hereinafter, the gap between the global FL model that the proposed FL algorithm converges to, $\boldsymbol{g}_{\mu}$, and the optimal global FL model $\boldsymbol{g}^*$ is referred as the convergence accuracy. $ \varpi^1_\mu$ is used to capture the convergence accuracy. From Theorem \ref{th:1}, we can also see that $ \varpi^1_\mu$ and $\varpi^2_\mu$ jointly determine the convergence speed. This is because as $ \varpi^1_\mu$ and $\varpi^2_\mu$ decrease, the value of $\mathbb E \left[ F\left(\boldsymbol{g}_{\mu+1}\right)-F\left(\boldsymbol{g}^{*}\right)\right] $ will be smaller than the value of $\mathbb E\left[ F\left(\boldsymbol{g}_{\mu}\right)-F\left(\boldsymbol{g}^{*}\right)\right]$. Hence, as $ \varpi^1_\mu$ and $\varpi^2_\mu$ decrease, the speed of $\boldsymbol{g}_{\mu+1}$ converging to $\boldsymbol{g}^{*}$ increases.
 Theorem \ref{th:1} shows that the prediction of local FL models affects both convergence speed and convergence accuracy. In particular, as the prediction errors of local FL models $E_{i,\mu}$ is smaller than $\gamma$, then the predicted local FL models can be used to update the global FL model hence improving the convergence speed and accuracy of the proposed FL algorithm. From Theorem \ref{th:1}, we can also see that the number of training data samples $K_i$ and the connection probability $p_{i,\mu}$ of each user $i$, also affect the convergence speed and accuracy.     
 
Based on Theorem \ref{th:1}, we have the following observations.
\begin{corollary}\label{le:1}
\emph{If the BS cannot predict the local FL models of the users that are not allocated any RBs, the upper bound of $\mathbb E\left[F\left(\boldsymbol{g}_{\mu+1}\right) -  F\left(\boldsymbol{g}^*\right)\right]$ can be given by
\begin{equation}\label{eq:le1}
\begin{split}
&\mathbb E \left[ F\left(\boldsymbol{g}_{\mu+1}\right)-F\left(\boldsymbol{g}^{*}\right)\right]   \leq \frac{\zeta_{\mu}^1\left(9K- 8\sum\limits_{i=1}^U K_ip_{i,\mu}\right)}{2LK}\\
&\;+\left(1-\frac{\vartheta}{L}+\frac{\vartheta\zeta_{\mu}^2\left(9K- 8\sum\limits_{i=1}^U K_ip_{i,\mu}\right)}{LK}\right)\mathbb E\left(F \left(\boldsymbol{g}_{\mu} \right)-F\left(\boldsymbol{g}^{*}\right)\right).
\end{split}
\end{equation}
}
\end{corollary}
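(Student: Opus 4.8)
The plan is to obtain this corollary as a direct specialization of Theorem~\ref{th:1}, so that no new estimation is required. The key observation is that the scenario described here --- in which the BS cannot predict the local FL models of the users that are not allocated any RBs --- corresponds exactly to the case where no predicted model $\boldsymbol{\hat w}_{i,\mu}$ is ever incorporated into the global update (\ref{eq:w2}). Since a prediction that is never formed can never meet the accuracy requirement $E_{i,\mu}\le\gamma$, this is equivalent to forcing $\mathbbm{1}_{\left\{E_{i,\mu}\le\gamma\right\}}=0$ for every user $i$ in the bound of Theorem~\ref{th:1}.

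First I would recall the two coefficients $\varpi^1_\mu$ and $\varpi^2_\mu$ from Theorem~\ref{th:1}, each of which contains a prediction term gated by the indicator $\mathbbm{1}_{\left\{E_{i,\mu}\le\gamma\right\}}$. Setting this indicator to zero collapses the bracketed factor $\left(\zeta_{i,\mu}^1-\left(\zeta_{i,\mu}^1+\varsigma_{i,\mu}^1\right)\mathbbm{1}_{\left\{E_{i,\mu}\le\gamma\right\}}\right)$ down to $\zeta_{i,\mu}^1$, so that $\varpi^1_\mu$ reduces to $\frac{1}{2LK}\sum_{i=1}^U K_i\left(1-p_{i,\mu}\right)\zeta_{i,\mu}^1$. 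An identical substitution in $\varpi^2_\mu$ removes the $\varsigma_{i,\mu}^2$ contribution and leaves $1-\frac{\vartheta}{L}+\frac{\vartheta}{LK}\sum_{i=1}^U K_i\left(1-p_{i,\mu}\right)\zeta_{i,\mu}^2$. Substituting these two simplified coefficients back into inequality (\ref{eq:th1}) then yields exactly (\ref{eq:le1}).

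I expect essentially no analytical obstacle, since all of the heavy lifting is already carried out in the proof of Theorem~\ref{th:1} in Appendix~A. The only point that warrants care is justifying that ``no prediction'' is faithfully modeled by $\mathbbm{1}_{\left\{E_{i,\mu}\le\gamma\right\}}=0$ rather than by some other degenerate choice: one should confirm that, in the derivation of Theorem~\ref{th:1}, the prediction-induced gradient deviation $\|\nabla\hat F_i(\boldsymbol{g}_\mu)\|$ enters the upper bound \emph{only} through terms multiplied by this indicator, so that switching it off cleanly recovers the dynamics of the FL algorithm without prediction. Once that bookkeeping is confirmed, the corollary follows immediately.
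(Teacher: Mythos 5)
Your proposal is correct and coincides with the paper's own proof: the paper likewise argues that the absence of prediction forces the indicator-gated terms $\left(\zeta_{i,\mu}^1+\varsigma_{i,\mu}^1\right)\mathbbm{1}_{\left\{E_{i,\mu}\le\gamma\right\}}$ and $\left(\zeta_{i,\mu}^2+\varsigma_{i,\mu}^2\right)\mathbbm{1}_{\left\{E_{i,\mu}\le\gamma\right\}}$ to vanish, and then substitutes into (\ref{eq:th1}) to obtain (\ref{eq:le1}). Your additional bookkeeping check --- that the prediction deviation enters the bound of Theorem~\ref{th:1} only through indicator-gated terms --- is a sound precaution and is indeed confirmed by the derivation in Appendix~A.
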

\begin{proof}
If the BS cannot predict the users' local FL models, we have $\mathbbm{1}_{\left\{ E_{i,\mu}\le \gamma \right\}}=0$ and hence $\mathbb E\left(A\right)=\sum\limits_{i=1}^U K_ip_{i,\mu}$. Substituting $\mathbb E\left(A\right)=\sum\limits_{i=1}^U K_ip_{i,\mu}$ into (\ref{eq:th1}), we can obtain (\ref{eq:le1}). This completes the proof.
\end{proof}

From Corollary \ref{le:1} and Theorem \ref{th:1}, we can see that $\varpi^1_\mu \leq  \frac{\zeta_{\mu}^1\left(9K- 8\sum\limits_{i=1}^U K_ip_{i,\mu}\right)}{2LK}$ and $\varpi^2_\mu \leq 1-\frac{\vartheta}{L}+\frac{\vartheta\zeta_{\mu}^2\left(9K- 8\sum\limits_{i=1}^U K_ip_{i,\mu}\right)}{LK}$, which implies that the prediction of local FL models can improve the convergence speed and convergence accuracy. Hence, this is the lower bound of the expected convergence of the proposed FL algorithm. Corollary \ref{le:1} also shows that there exists a gap between $\boldsymbol{g}_\mu$ and $\boldsymbol{g}^*$ at convergence. This gap is caused by the probabilistic user association. 

\begin{corollary}\label{le:2}
\emph{If the prediction accuracy of each user's local FL model satisfies $E_{i,\mu} \le \gamma$, the upper bound of $\mathbb E \left[F\left(\boldsymbol{g}_{\mu+1}\right) -  F\left(\boldsymbol{g}^*\right)\right]$ can be given by
\begin{equation}\label{eq:le2}
\begin{split}
\mathbb E\left[ F\left(\boldsymbol{g}_{\mu+1}\right)-F\left(\boldsymbol{g}^{*}\right)\right]  & \leq \frac{\zeta_{\mu}^1}{2L}\\
&\!\!\!\!\!\!\!\!\!\!\!\!\!\!\!\!\!\!\!\!\!+\left(1-\frac{\vartheta}{L}+\frac{\vartheta\zeta_{\mu}^2}{L}\right)\mathbb E\left(F \left(\boldsymbol{g}_{\mu} \right)-F\left(\boldsymbol{g}^{*}\right)\right).
\end{split}
\end{equation}
}
\end{corollary}
\begin{proof}
If $E_{i,\mu} \le \gamma$, we have $\mathbbm{1}_{\left\{ E_{i,\mu}\le \gamma \right\}}=1$ and hence $\mathbb E\left(A\right)=K$. Substituting $\mathbb E\left(A\right)=K$ into (\ref{eq:th1}), (\ref{eq:le2}) can be obtained. This completes the proof.
\end{proof}

From Corollary \ref{le:2}, we can see that if $E_{i,\mu} \le \gamma$, the convergence accuracy only depends on $\zeta_{\mu}^1$ and $\zeta_{\mu}^2$. Hence, this is an upper bound of the expected convergence of the proposed FL algorithm.  

Theorem \ref{th:1} derives the convergence accuracy and rate for the proposed FL algorithm that uses full gradient descent. Next, we derive the convergence accuracy and rate of the proposed FL algorithm when it uses a \emph{stochastic gradient descent (SGD)} \cite{zinkevich2010parallelized} method to update the local FL models. We assume that each user will select a subset of $M$ In particular, using the SGD update method, the update of local FL model at each user in (\ref{eq:g}) can be rewritten by
\begin{equation}\label{eq:gsgd}
{\color{black}
\boldsymbol{w}_{i,\mu+1}=\boldsymbol{ g}\left(\boldsymbol{a}_{\mu}\right) -{\lambda} \sum\limits_{k \in \mathcal{K}_{i,\mu}} \nabla {f\left( { \boldsymbol{ g}\left(\boldsymbol{a}_\mu\right) ,{\boldsymbol{x}_{ik}},{\boldsymbol{y}_{ik}}} \right)},}
\end{equation}  
where $\mathcal{K}_{i,\mu}$ is the subset of $M$ training data samples selected from user $i$'s training dataset $\mathcal{K}_i$. 
Hereinafter, we use $\nabla {f_{ik}\left( { \boldsymbol{ g}_\mu} \right)}$ for $\nabla {f\left( { \boldsymbol{ g}\left(\boldsymbol{a}_\mu\right) ,{\boldsymbol{x}_{ik}},{\boldsymbol{y}_{ik}}} \right)}$. {\color{black} The users and the BS will exchange FL model parameters once the users implement one SGD update in (\ref{eq:gsgd}).}
Given (\ref{eq:gsgd}), the convergence accuracy and rate of the proposed FL scheme when it uses the SGD update method is given in the following theorem.


 \begin{theorem}\label{th:2}
\emph{Given the optimal global FL model $\boldsymbol{g}^*$, the learning rate $\lambda=\frac{1}{L}$, and the subset of training data samples used to update each user $i$'s local FL model at iteration $\mu$, $\mathcal{K}_{i,\mu}$, the upper bound of $\mathbb E \left[F\left(\boldsymbol{g}_{\mu+1}\right) -  F\left(\boldsymbol{g}^*\right)\right]$ for the proposed FL algorithm with SGD can be given by
\begin{equation}\label{eq:th2}
\begin{split}
\mathbb E \left[ F\left(\boldsymbol{g}_{\mu+1}\right)-F\left(\boldsymbol{g}^{*}\right)\right]  \leq \psi ^{\textrm{1}}_\mu+\psi^{\textrm{2}}_\mu \mathbb E\left(F \left(\boldsymbol{g}_{\mu} \right)-F\left(\boldsymbol{g}^{*}\right)\right).
\end{split}
\end{equation}
where $\psi^1_\mu=\frac{\zeta_{\mu}^1\left(9K- 8\mathbb E\left(A^\textrm{S}\right)\right)}{2LK}$ and $\psi^2_\mu=\left(1-\frac{\vartheta}{L}+\frac{\vartheta\zeta_{\mu}^2\left(9K- 8\mathbb E\left(A^\textrm{S}\right)\right)}{LK}\right)$ with $\mathbb E\left(A^\textrm{S}\right)=\sum\limits_{i=1}^U K_i\left(1-p_{i,\mu}\right)\mathbbm{1}_{\left\{ E_{i,\mu}\le \gamma \right\}}+\sum\limits_{i=1,i \ne i^*}^U\sum\limits_{k \in \mathcal{K}_{i,\mu}}  p_{i,\mu}+K_{i^*}$
}
\end{theorem}
\begin{proof} See Appendix B.
\end{proof}
From Theorem \ref{th:2}, we can see that, in order to guarantee the convergence of the proposed FL with SGD, $\psi^2_\mu$ must be smaller than 1 (i.e., $\psi^2_\mu<1$). Since $\psi^2_\mu$ depends on the predicted local FL models, the local FL model of user $i^*$, user selection probability $p_{i,\mu}$, and the size of training dataset $\mathcal{K}_{i,\mu}$, we can adaptively adjust these parameters so as to guarantee the convergence of the proposed FL with SGD. For instance, we can adjust $p_{i,\mu}$ so as to maximize $\mathbb E\left(A^\textrm{S}\right)$ hence decreasing $\psi^2_\mu$.

\subsubsection{Implementation Analysis}
With regards to the implementation of the proposed algorithm, the BS must: a) Determine the user selection policy, b) Use an optimization algorithm to find the optimal RB allocation for each user, and c) Use {\color{black}MLP} to predict the users' local FL models. To determine the user selection policy, the BS requires $\|\boldsymbol{e}_{i,\mu}\|$ of each user $i$ at each iteration $\mu$. Hence, each user must transmit $\|\boldsymbol{e}_{i,\mu}\|$ to the BS at each iteration. Since $\|\boldsymbol{e}_{i,\mu}\|$ is a scalar, the data size of which is much smaller than the data size of the local FL models that the users must transmit to the BS during each iteration. In consequence, we can ignore the overhead of each user transmitting $\|\boldsymbol{e}_{i,\mu}\|$ to the BS. To use an optimization algorithm for optimizing RB allocation, the BS needs to calculate the total transmission delay $l_{i}^\textrm{U}\left(\boldsymbol{r}_{i,\mu}\right) +l_{i}^\textrm{D} $ of each user $i$ over each RB. The BS can use channel estimation methods to learn the signal-to-interference-plus-noise ratio over each RB so as to calculate $l_{i}^\textrm{U}\left(\boldsymbol{r}_{i,\mu}\right) +l_{i}^\textrm{D} $. To train the {\color{black}MLP} that is used for the predictions of users' local FL models, the BS will use the local FL models that are transmitted from the users who have RBs. These local FL models are originally used for the update of the global FL model and, hence, the BS does not require any additional information for training {\color{black}MLP}. {\color{black}Although our analytical results and the proposed solutions are considered for a network that consists of only one BS, one can easily extend them to a network that consists of multiple BSs by considering user association with different BSs.} 

\subsubsection{Complexity Analysis}
With regards to the complexity of the proposed algorithm, we first analyze the complexity of the interior-point method that is used to find the optimal RB allocation for each user. Let $L_\textrm{O}$ be the number of iterations until the interior-point method converges. The complexity of the interior-point method is $\mathcal O\left(L_\textrm{O}UR\right)$ \cite{boyd2004convex}. Therefore, the complexity of the interior-point method, which depends on both the number of RBs and users, is linear. The complexity of training {\color{black}MLP} depends on training data samples and number of users. Since {\color{black}MLP} is trained by the BS which has enough computational resource, the overhead of training {\color{black}MLP} can be ignored.        
\section{Simulation Results and Analysis}
For our simulations, we consider a circular network area having a radius $r=500$ m with one BS at its center servicing $U = 15$ uniformly distributed users. {\color{black}The value of  the inter-cell interference at each RB is randomly selected from $\left[10^{-4}, 0.01\right]$}. The FL algorithm is used for two learning tasks: a) the function fitting task where an FL algorithm is used for {\color{black}regression} and b) the {\color{black}classification} task where an FL algorithm is used to identify the handwritten digits from 0 to 9. For the {\color{black}classification} task, 
each user trains a MLP using the MNIST dataset \cite{MNIST}. The size of neuron weight matrices are $784 \times 50$ and $50 \times 10$. The BS also implements a MLP  for each user to predict its local FL model parameters. The {\color{black}MLP} is generated based on the MATLAB machine learning toolbox \cite{MNISTmatlab}. 1,000 handwritten digits are used to test the trained FL algorithms. The other parameters used in simulations are listed in Table~I.
  For comparison purposes, we use two baselines: a) an FL algorithm that uses the proposed user association policy without the prediction of users' local FL models at each given learning step and b) a standard FL algorithm in \cite{konevcny2016federated} that randomly determines user selection and resource allocation without using {\color{black}MLP} to estimate the local FL model parameters of each user at each given learning step. Hereinafter, \emph{the proposed FL refers to the proposed FL algorithm that uses stochastic gradient descent method to train the local FL models}. The simulation results are averaged over a large number of independent runs.

\begin{table}\footnotesize
  \newcommand{\tabincell}[2]{\begin{tabular}{@{}#1@{}}#2.2\end{tabular}}
\renewcommand\arraystretch{1}
 \caption{
    \vspace*{-0.05em}SYSTEM PARAMETERS}\vspace*{-0.6em}
\centering  
\begin{tabular}{|c|c|c|c|}
\hline
\textbf{Parameter} & \textbf{Value} & \textbf{Parameter} & \textbf{Value} \\
\hline
$\alpha$&2  &$N_0$& -174 dBm/Hz \\
\hline
 $P$ & 1 W & $B$& 1 MHz  \\
\hline
 $R$& 5 &$B^\textrm{D}$& 20 MHz \\
\hline
$ N $ &5& $P_\textrm{B}$ & 1 W \\
\hline
$N_\textrm{out}$&$10$& $K_i$ & 500  \\
\hline
$N_\textrm{in} $&$784 $& $\gamma$ &0.01\\
\hline
$W $&$5000 $& $\gamma_\textrm{R}$ &5\\
\hline
\end{tabular}
 \vspace{-0.5cm}
\end{table}

\begin{figure*}[!t]
\centering
\vspace{0cm}
\subfigure[]{
\label{fig3a} 
\includegraphics[width=5.8cm]{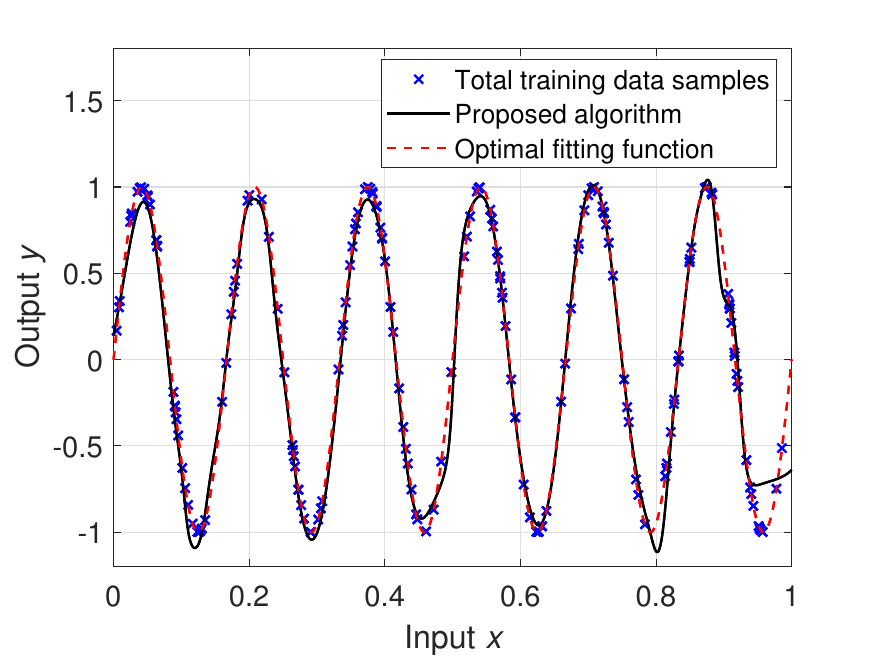}}
\subfigure[]{ 
\label{fig3b} 
\includegraphics[width=5.8cm]{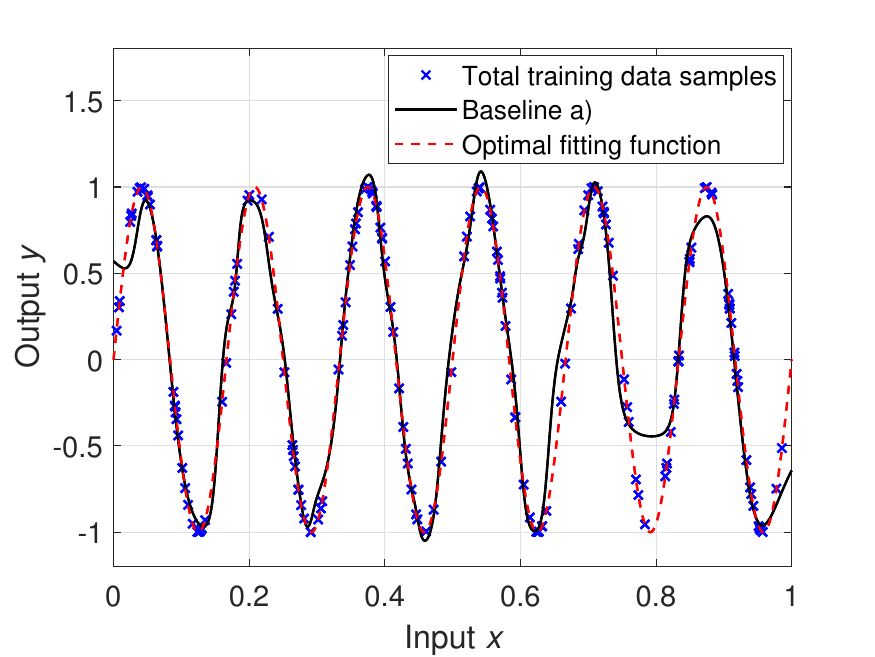}}
\subfigure[]{ 
\label{fig3b} 
\includegraphics[width=5.8cm]{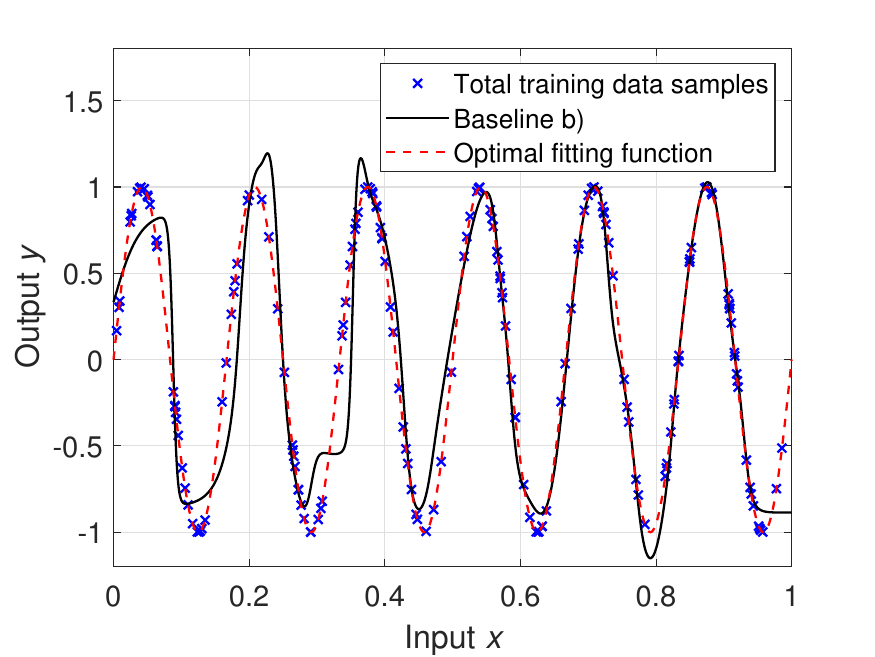}}
  \vspace{-0.1cm}
 \caption{\label{fig8}  An example of implementing FL for {\color{black}regression}.}
 \vspace{-0.3cm}
\end{figure*}

In Fig. \ref{fig8}, we show an example of implementing FL for {\color{black}regression}. In this example, FL is used to approximate the function $y=\sin\left(2\pi x\right)$ where $x\in \left[0,1\right]$ is the FL input and $y$ is the FL output. 15 users jointly implemented the considered FL algorithms and each user has $12$ training data samples. To perform this learning task, each user implements a function fitting neural network \cite{fnetmatlab} based FL algorithm. In Fig. \ref{fig8}, the total training data samples are the training data samples of all users ($15\times12=180$). The optimal fitting function is the target function that the FL algorithm is approximating. From Fig. \ref{fig8}, we can see that the proposed FL algorithm approximates the optimal fitting function better than baselines a) and b). This is because the proposed FL algorithm uses a probabilistic user selection scheme to select the users that transmit their local FL models to the BS at each FL iteration which improves the learning speed. Meanwhile, at each given learning step, the proposed FL algorithm uses {\color{black}MLP} to estimate the local FL models of the users that are not allocated any RBs for their local model parameter transmission so as to include more local FL models to generate the global FL model.

\begin{figure}[!t]
  \begin{center}
   \vspace{0cm}
    \includegraphics[width=8cm]{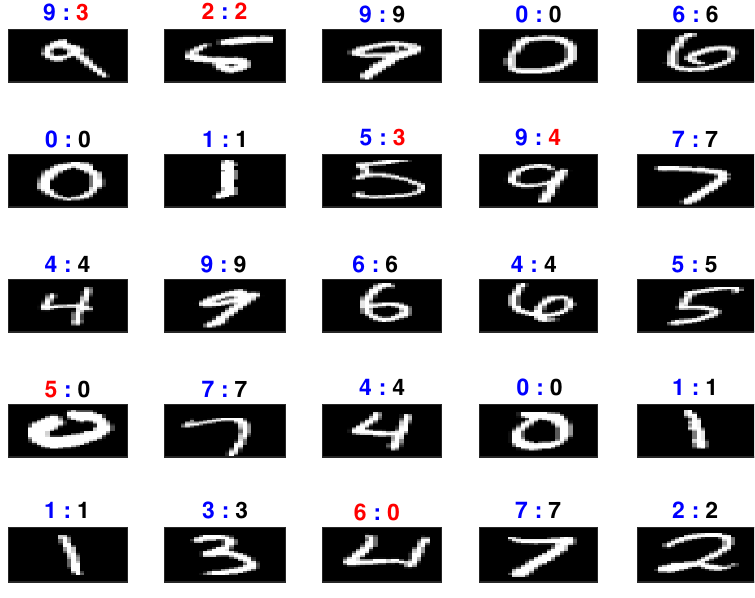}
    \vspace{-0.25cm}
 {\color{black}   \caption{\label{fig1} An example of implementing FL algorithms for {\color{black}classification} of handwritten digits. In this figure, the black digits that are displayed above the handwritten digits are the {\color{black}classification} results of the proposed FL algorithm. The black digits that are displayed below the handwritten digits are the {\color{black}classification} results of the standard FL algorithm. Meanwhile, the red digits in this figure represents the wrong {\color{black}classification} results from all considered FL algorithms.}}
  \end{center}\vspace{-0.6cm}
\end{figure}

Fig. \ref{fig1} shows an example of implementing the considered FL algorithms for the {\color{black}classification} of handwritten digits from 0 to 9. 
 From Fig. \ref{fig1}, we can see that, for a total of {\color{black}25} handwritten digits, the proposed FL algorithm can correctly identify {\color{black}22} digits while the standard FL algorithm can identify {\color{black}20} digits. This is because, at each given learning step, the proposed FL algorithm can use {\color{black}MLP} to estimate users' local FL model parameters. Hence, even though the number of RBs is limited, by employing our proposed FL algorithm, the BS can exploit all users' local FL models to generate the global FL model and, thus improving FL convergence time and decreasing {\color{black}FL training loss}.


\begin{figure}
\centering
\vspace{0cm}
\subfigure[]{
\label{fig2a} 
\includegraphics[width=8cm]{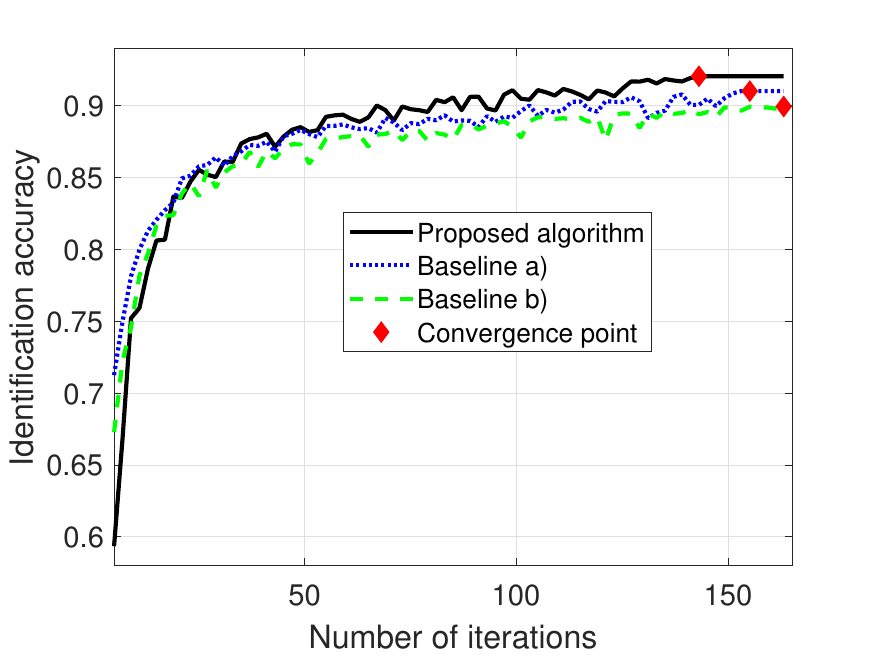}}
\subfigure[]{ 
\label{fig2b} 
\includegraphics[width=8cm]{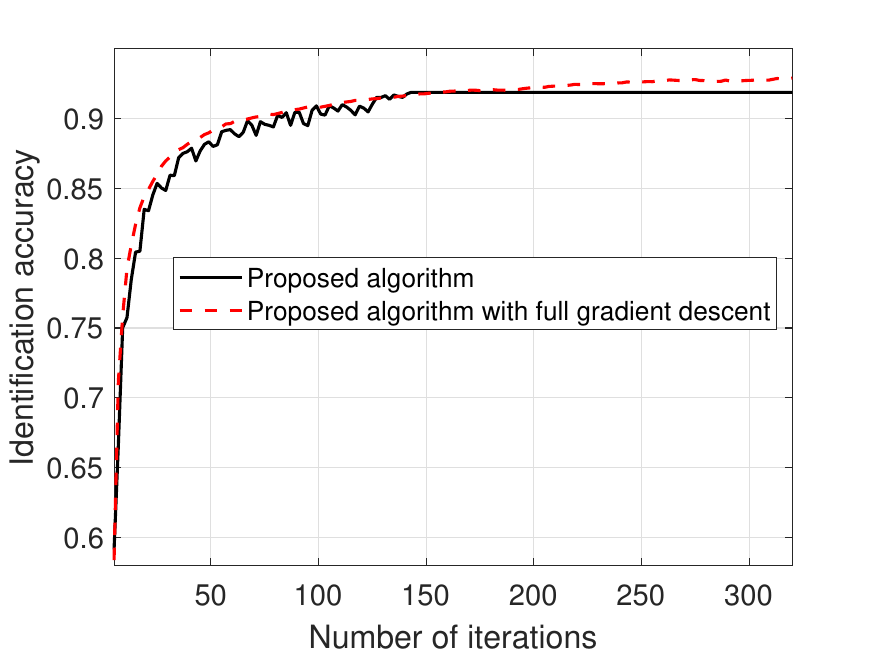}}
  \vspace{-0.2cm}
 \caption{\label{fig2} Identification accuracy as the number of iteration varies.}
 \vspace{-0.6cm}
\end{figure}

In Fig. \ref{fig2}, we show how the FL {\color{black}identification accuracy} changes as time elapses. From this figure, we can see that, as time elapses, the FL {\color{black}identification accuracy} of all considered algorithms increases. This is because the local FL models and the global FL model are trained by the users and the BS  as time elapses. From Fig. \ref{fig2a}, we can see that the proposed FL algorithm can reduce the number of iterations needed for convergence, by, respectively, up to 9\% and 14\% compared to baselines a) and b). The 9\% gain stems from the fact that the proposed FL algorithm uses {\color{black}MLP} to estimate the local FL model parameters of the users that are not allocated any RBs for local FL model transmission at each given learning step. The 14\% gain stems from the fact that the proposed FL algorithm uses the proposed probabilistic user selection scheme to select the users for local FL model transmission and uses the ANNs to estimate the local FL model parameters of the users that do not RBs for local FL model transmission at each given learning step. From Fig. \ref{fig2b}, we can see that the proposed algorithm converges faster than the proposed algorithm with full gradient descent. However, the proposed algorithm with full gradient descent can achieve better {classification accuracy} compared to the proposed algorithm. This is due to the fact that a full gradient descent method uses all training data samples to train the local FL models at each iteration while the SGD method uses a subset of training data samples to train the local FL models.

  \begin{figure}[!t]
  \begin{center}
   \vspace{0cm}
    \includegraphics[width=8cm]{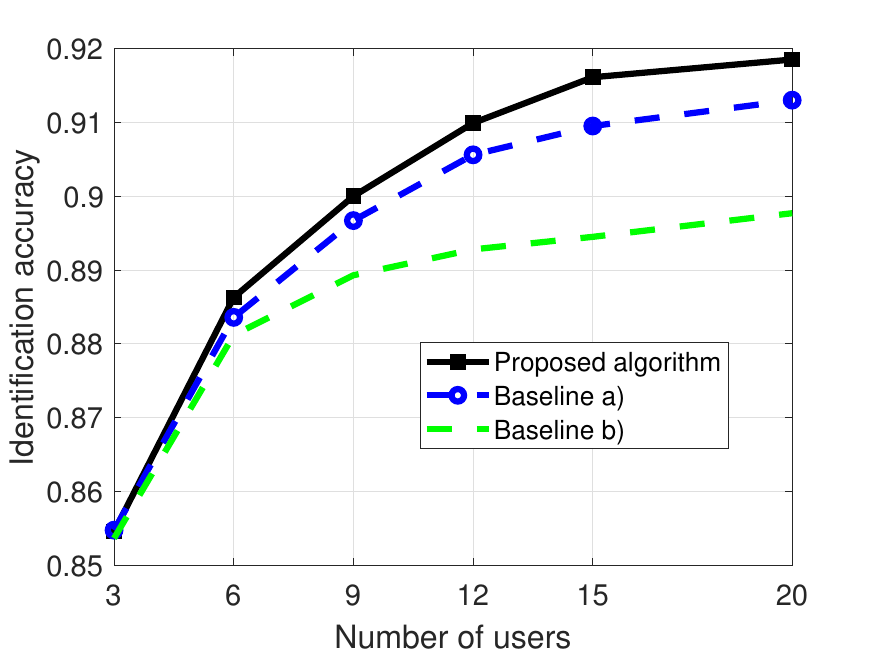}
    \vspace{-0.25cm}
 {\color{black}   \caption{\label{fig4} Training loss changes as the number of users varies.}}
  \end{center}\vspace{-0.6cm}
\end{figure}

{\color{black}Fig. \ref{fig4} shows how the identification accuracy changes as the number of users varies. In this figure, we can see that, as the number of users increases, the FL identification accuracy of all considered algorithms increases. This is because as the number of users increases, the number of data samples used for training FL increases. From Fig. \ref{fig4}, we can also see that, for a network with 20 users, the proposed FL algorithm can improve the {\color{black}identification accuracy} by up to 1\% and 3\%, respectively, compared to baselines a) and b). These gains stem from the fact that, in the proposed FL algorithm, a probabilistic user selection scheme is developed for user selection and local FL model transmission. Meanwhile, to include additional local FL models to generate the global FL model, at each given learning step, the proposed FL algorithm uses ANNs to estimate the local FL model parameters of the users that are not allocated any RBs for local FL model transmission hence improving the {\color{black}classification} accuracy. Fig. \ref{fig4} also shows that, as the number of users increases, the gap between the {\color{black}identification accuracy} resulting from the proposed FL algorithm and the baselines increases. This is because, for the considered baselines, as the number of users increases, the number of users that can transmit their local FL models to the BS remains the same due to the limited number of RBs. In contrast, the proposed FL algorithm can use {\color{black}MLP} to estimate the local FL models of the users that are not allocated any RBs for transmitting their local model parameters at each given learning step and, hence, include more local FL models to generate the global FL model.}

  \begin{figure}[!t]
  \begin{center}
   \vspace{0cm}
    \includegraphics[width=8cm]{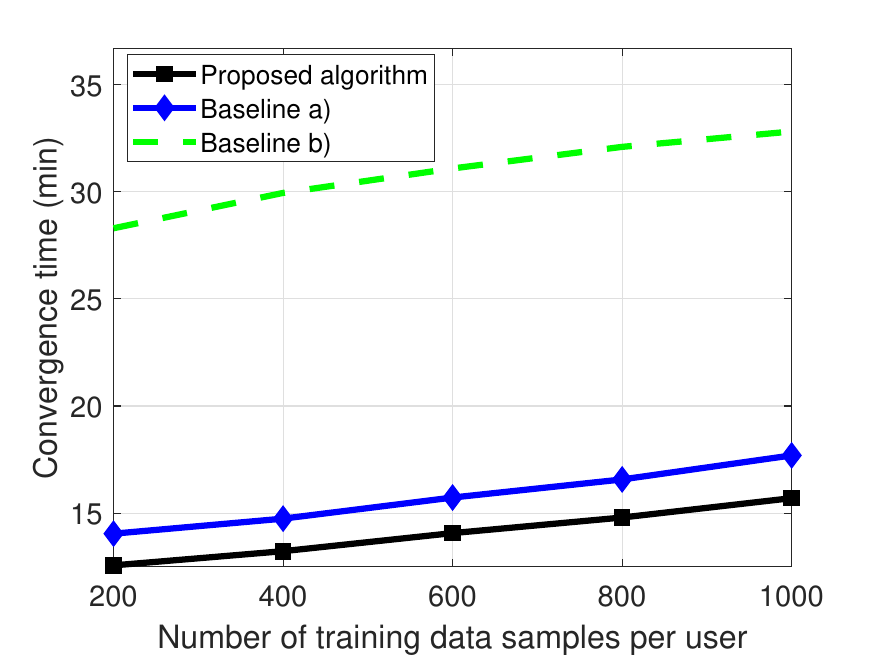}
    \vspace{-0.25cm}
    \caption{\label{fig5} Convergence time changes as the number of training data samples varies.}
  \end{center}\vspace{-0.6cm}
\end{figure}

In Fig. \ref{fig5}, we show how how the convergence time changes as the number of training data samples varies. Fig. \ref{fig5} shows that, as the number of training data samples increases, the convergence time of all considered FL algorithms increases. This is due to the fact that all of the considered FL algorithms use a stochastic gradient descent method to train their local FL models. Hence, as the number of training data samples increases, the various FL algorithms need to use more time to sample the local datasets. Fig. \ref{fig5} also shows that, for a scenario in which each user has 200 training data samples, the proposed FL algorithm can reduce convergence time by up to 11\% and 56\%, compared to baselines a) and b). These gains stem from the fact the proposed FL algorithm uses ANNs to estimate the local FL models of the users that are not allocated any RBs for transmitting their local model parameters at each given learning step, and uses probabilistic user selection scheme to determine the users that will transmit the local FL model parameters to the BS. In addition, the proposed algorithm optimizes RB allocation at each iteration.

  \begin{figure}[!t]
  \begin{center}
   \vspace{0cm}
    \includegraphics[width=8cm]{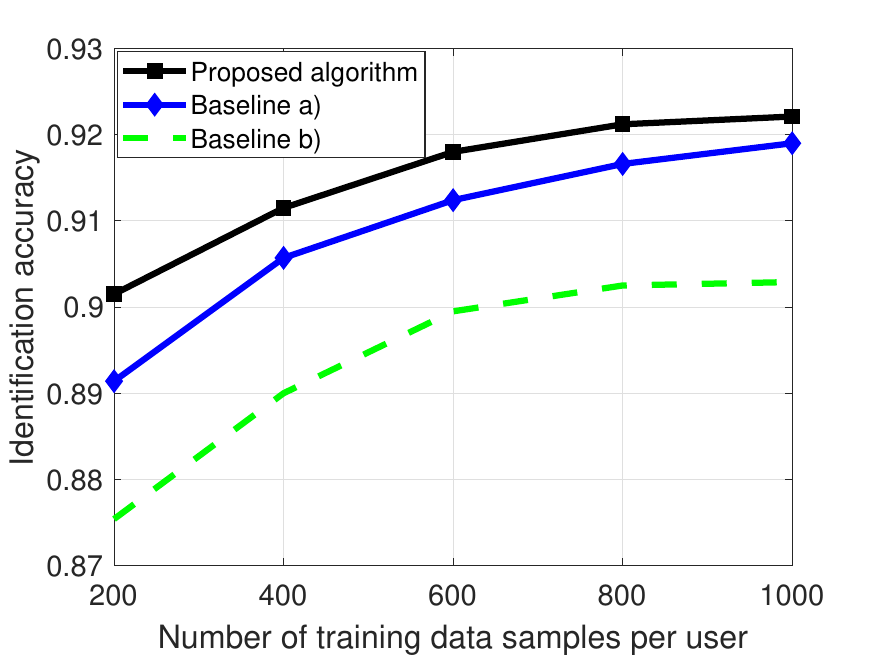}
    \vspace{-0.25cm}
    \caption{\label{fig7} {\color{black}Training loss} changes as the number of training data samples varies.}
  \end{center}\vspace{-0.6cm}
\end{figure}
Fig. \ref{fig7} shows how the handwritten digit identification accuracy changes as the number of training data samples varies. From Fig. \ref{fig7}, we can see that, as the number of training data samples increases from 200 to 800, the {\color{black}identification accuracy} of all considered algorithms increase. This is because, as the number of training data samples increases, each user can use more data samples to train the local FL model thus improving the {\color{black}identification accuracy} of the FL algorithms. As the number of training data samples continues to increase, the {\color{black}identification accuracy} of all considered algorithms improve slowly. This is because 800 training data samples may include all features of the MNIST dataset.   Fig. \ref{fig7} also shows that, as the number of training data samples increases, the gap between the {\color{black}identification accuracy} resulting from the proposed FL algorithm and baseline a) decreases. This is due to the fact that, as the number of training data samples per user increases, each local FL model is trained by a dataset that contains all features of MNIST dataset, and, hence, the BS can use fewer local FL models to generate the global FL model. 

 \begin{figure}[!t]
  \begin{center}
   \vspace{0cm}
    \includegraphics[width=8cm]{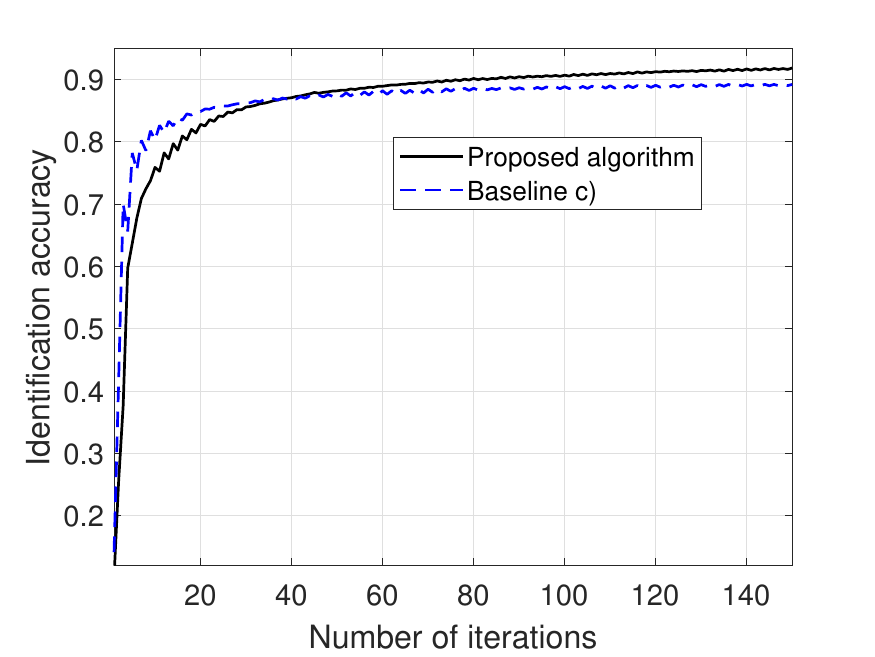}
    \vspace{-0.25cm}
  {\color{black}  \caption{\label{LGD2}Convergence of FL algorithms.}}
  \end{center}\vspace{-0.6cm}
\end{figure}

{\color{black} In Fig. \ref{LGD2}, we show the convergence of both the proposed FL algorithm and the baseline c). In baseline c), the users that transmit local FL models to the BS are selected based on our proposed user selection scheme. For unselected users, the BS will use previous received local FL models for global FL model updates as done in LAG algorithm of \cite{NIPS2018_7752}. From this figure, we can see that, the proposed FL algorithm can improve the identification accuracy by about 1.7\% compared to  baseline c). This is because the proposed FL algorithm estimates the local FL models of the users that are not allocated any RBs.  }

\section{Conclusion}
In this paper, we have developed a novel framework that enables the implementation of FL over wireless networks. {\color{black}The proposed FL framework can be used for various real-world applications such as mobile keyboard prediction, network traffic analysis and prediction, device monitoring, and user behavior analysis for virtual reality users.} We have formulated an optimization problem that jointly considers user selection and resource allocation for the joint minimization of the FL convergence time and the FL {\color{black}training loss}. To solve this problem, we have proposed a probablistic user selection scheme that allows the users whose local FL models have large impacts on the global FL model to associate with the BS with high probability. Given the user selection policy, the uplink RB allocation is determined. To further improve the FL convergence speed, we have studied the use of {\color{black}MLP} to estimate the local FL models of the users that are not allocated any RBs at each given learning step.
Simulation results have shown that the proposed FL algorithm yields significant improvements in terms of convergence time compared to the standard FL algorithm.

  \section*{Appendix}
\subsection{Proof of Theorem \ref{th:1}}\label{Ap:a}
Since the loss function ${f\left( {\boldsymbol{g}_\mu,{\boldsymbol{x}_{ik}},{{y}_{ik}}} \right)}$ is strongly convex and twice-continuously differentiable, we have the following observations:
\begin{itemize}
\item $\nabla F\left( \boldsymbol{g}_\mu\right)$ is uniformly Lipschitz continuous with respect to $\boldsymbol{g}_\mu$ \cite{friedlander2012hybrid} and, hence, we can find a positive constant $L$, such that 
\begin{equation}\label{itproofas1}
\|\nabla  F\left(\boldsymbol{g}_{\mu+1} \right)  - \nabla F\left( \boldsymbol{g}_{\mu} \right)\|
\leq  L\| \boldsymbol{g}_{\mu+1} - \boldsymbol{g}_\mu\|,
\end{equation}
where $\| \boldsymbol{g}_{\mu+1} - \boldsymbol{g}_{\mu}\|$ is the norm of $ \boldsymbol{g}_{\mu+1} - \boldsymbol{g}_{\mu}$.    

\item Since $F\left( \boldsymbol{g}\right)$ is strongly convex, we have
\begin{equation}\label{itproofas2}
F\left(\boldsymbol{g}_{\mu+1} \right) \geq F\left(\boldsymbol{g}_{\mu} \right)
+\left( \boldsymbol{g}_{\mu+1} - \boldsymbol{g}_{\mu}\right)^{T}  \nabla F\left(\boldsymbol{g}_{\mu}\right)
+\frac  {\vartheta  } 2 \| \boldsymbol{g}_{\mu+1} - \boldsymbol{g}_{\mu}\|^2.
\end{equation}
where $\vartheta$ is a positive parameter. By minimizing both sides of \eqref{itproofas2} with respect to 
$\boldsymbol{g}_{\mu+1}$, we have 
\begin{equation}\label{eq:Fg}
  F\left(\boldsymbol{g}^* \right) \geq 
F\left (\boldsymbol{g}_{\mu} \right)-\frac{1}{2\vartheta}\|\nabla F\left(\boldsymbol{g}_{\mu}\right)\|^2.
\end{equation}

\item Since $F\left( \boldsymbol{g}\right)$ is twice-continuously differentiable, we have
\begin{equation}\label{itproofas2_2}
 \vartheta \boldsymbol{I} \preceq
\nabla^2 F \left(\boldsymbol{g} \right) \preceq L \boldsymbol{I}.
\end{equation}

\item We also assume that $\| \nabla F_i\left(\boldsymbol{g}_\mu\right) \|^2\leq \zeta_{\mu}^1+\zeta_{\mu}^2  \| \nabla F \left(\boldsymbol{g}_{\mu} \right)\|^2$ with $\zeta_{i,\mu}^1\ge 0$ and $\zeta_{i,\mu}^2\ge 1$.

\end{itemize} 

Using the second-order Taylor expansion, $F \left(\boldsymbol{g}_{\mu+1} \right)$ can be rewritten as follows:
\begin{equation}\label{itproofas3}
\begin{split}
F \left(\boldsymbol{g}_{\mu+1} \right) 
=&F\left(\boldsymbol{g}_{\mu} \right)
+\left( \boldsymbol{g}_{\mu+1} - \boldsymbol{g}_{\mu}\right)^{T}  \nabla F\left(\boldsymbol{g}_{\mu} \right)\\
&+\frac  {1} 2\left( \boldsymbol{g}_{\mu+1} - \boldsymbol{g}_{\mu}\right)^T
\nabla^2 F\left(\boldsymbol{g} \right)
\left( \boldsymbol{g}_{\mu+1} - \boldsymbol{g}_{\mu}\right),\\
&\!\!\!\!\!\!\!\!\!\!\!\mathop \leq \limits^{\left( a \right)} F\left(\boldsymbol{g}_{\mu}\right)
+\left( \boldsymbol{g}_{\mu+1} - \boldsymbol{g}_{\mu}\right)^{T}  \nabla F\left(\boldsymbol{g}_{\mu} \right)
+\frac  {L } 2 \| \boldsymbol{g}_{\mu+1} - \boldsymbol{g}_{t}\|^2,
\end{split}
\end{equation}
where (a) stems from the fact that $\nabla^2 F \left(\boldsymbol{g} \right) \preceq L \boldsymbol{I}$. Based on (\ref{eq:w}) and (\ref{eq:g}), we have $\boldsymbol{g}_{\mu+1} - \boldsymbol{g}_{\mu}=\lambda\left(  \nabla F\left(\boldsymbol{g}_{\mu} \right)-\boldsymbol{e}_\mu\right)$ where $\boldsymbol{e}_\mu$ is a gradient deviation caused by the users that do not transmit their local FL models to the BS at iteration $\mu$ and the prediction errors of the local FL models that are estimated by the BS at iteration $\mu$. In particular, $\boldsymbol{e}_\mu$ can be expressed as
\begin{equation}\label{eq:errormu}
\begin{split}
\boldsymbol{e}_\mu=&\nabla F\left(\boldsymbol{g}_{\mu} \right)-\frac{\sum\limits_{i=1}^U K_ia_{i,\mu}\nabla F_i\left(\boldsymbol{g}_{\mu} \right) }{\sum\limits_{i=1}^U K_i\left(1-a_{i,\mu}\right)\mathbbm{1}_{\left\{ E_{i,\mu}\le \gamma \right\}}+\sum\limits_{i=1}^U K_ia_{i,\mu}}\\
&-\frac{ \sum\limits_{i=1}^U K_i\left(1-a_{i,\mu}\right)\left( \nabla F_i\left(\boldsymbol{g}_{\mu} \right)+ \nabla \hat F_i\left( \boldsymbol{g}_{\mu}\right) \right)\mathbbm{1}_{\left\{ E_{i,\mu}\le \gamma \right\}}}{\sum\limits_{i=1}^U K_i\left(1-a_{i,\mu}\right)\mathbbm{1}_{\left\{ E_{i,\mu}\le \gamma \right\}}+\sum\limits_{i=1}^U K_ia_{i,\mu}},
\end{split}
\end{equation}
where $\frac{1}{K}\sum\limits_{i=1}^U K_i\left(1-a_{i,\mu}\right)\left( \nabla F_i\left(\boldsymbol{g}_{\mu}\right)+ \nabla \hat F_i\left( \boldsymbol{g}_{\mu}\right) \right)\mathbbm{1}_{\left\{ E_{i,\mu}\le \gamma \right\}} $ is the sum of the gradients in the local FL models that are estimated by the BS while $\frac{1}{K}\sum\limits_{i=1}^U K_ia_{i,\mu}\nabla F_i\left(\boldsymbol{g}_{\mu}\right) $ is the sum of the gradients in the local FL models that are transmitted from the users. 

Let the learning rate $\lambda=\frac{1}{L}$. Based on (\ref{itproofas3}), $\mathbb E\left[ F\left(\boldsymbol{g}_{\mu+1}\right)\right]$ can be expressed as
 \begin{equation}\label{itproofas3_2}
\begin{split}
\mathbb E\left[ F\left(\boldsymbol{g}_{\mu+1}\right)\right]
\leq 
&\mathbb E\left(F\left(\boldsymbol{g}_{\mu} \right)\right)
-\frac{1}{L}\left(\|\nabla F \left(\boldsymbol{g}_{\mu} \right)\|^2-\boldsymbol{e}_\mu^{T}  \nabla F\left(\boldsymbol{g}_{\mu} \right) \right)\\
&+\frac  {1} {2L} \left(   \| \nabla F\left(\boldsymbol{g}_{\mu} \right)\|^2-2\boldsymbol e_\mu^{T} \nabla F\left(\boldsymbol{g}_{\mu}\right)+\| \boldsymbol e_\mu\|^2 \right),\\
 = & \mathbb E\left(F \left(\boldsymbol{g}_{\mu} \right)\right)-\frac{1}{2L}\|  \nabla F\left(\boldsymbol{g}_{\mu}\right) \|^2 +\frac{1}{2L}\mathbb E\left( \|\boldsymbol{e}_\mu\|^2 \right).
\end{split}
\end{equation}
Let $\mathcal{N}_1=\left\{a_{i,\mu}=1 | i \in \mathcal{U} \right\}$ be the set of selected users that transmit their local
FL models to the BS, $\mathcal{N}_2=\left\{a_{i,\mu}=0, E_{i,\mu}\le \gamma | i \in \mathcal{U} \right\}$ be the set of users that the BS can accurately estimate their local FL models, and $\mathcal{N}_3=\left\{i \in \mathcal{U}| i \notin \mathcal{N}_1,i \notin \mathcal{N}_2 \right\}$. Meanwhile, let $A={\sum\limits_{i=1}^U K_i\left(1-a_{i,\mu}\right)\mathbbm{1}_{\left\{ E_{i,\mu}\le \gamma \right\}}+\sum\limits_{i=1}^U K_ia_{i,\mu}}$. Given (\ref{eq:errormu}), we have

\begin{equation}\label{eq:eb}\small
\begin{split}
&\mathbb E\left(\| \boldsymbol{e}_\mu \|^2\right)=\\&\mathbb E\left(\left\|\nabla F\left(\boldsymbol{g}_{\mu} \right) -    \frac{ \sum\limits_{i=1}^U K_i\left(1-a_{i,\mu}\right)\left( \nabla F_i\left(\boldsymbol{g}_{\mu} \right)+ \nabla \hat F_i\left( \boldsymbol{g}_{\mu}\right) \right)\mathbbm{1}_{\left\{ E_{i,\mu}\le \gamma \right\}}}{\sum\limits_{i=1}^U K_i\left(1-a_{i,\mu}\right)\mathbbm{1}_{\left\{ E_{i,\mu}\le \gamma \right\}}+\sum\limits_{i=1}^U K_ia_{i,\mu}} \right. \right. \\& \left. \left. - \frac{\sum\limits_{i=1}^U K_ia_{i,\mu}\nabla F_i\left(\boldsymbol{g}_{\mu}\right) }{\sum\limits_{i=1}^U K_i\left(1-a_{i,\mu}\right)\mathbbm{1}_{\left\{ E_{i,\mu}\le \gamma \right\}}+\sum\limits_{i=1}^U K_ia_{i,\mu}}   \right\|^2\right), \\ 
&=\mathbb E \left(\left\|-\frac{ \left(K- A  \right)\sum\limits_{i \in \mathcal{N}_1}  {K_i\nabla F_i\left( {\boldsymbol{g}_\mu} \right)}}{{KA }} -\frac{\left(K-A \right) \sum\limits_{i \in \mathcal{N}_2} K_i {\nabla F_i\left( {\boldsymbol{g}_\mu} \right)}}{KA} \right. \right. \\& \left. \left. -\frac{\sum\limits_{i \in \mathcal{N}_2} K_i {\nabla \hat F_i\left( {\boldsymbol{g}_\mu} \right)}}{A}+\frac{\sum\limits_{i \in \mathcal{N}_3} K_i {\nabla F_i\left( {\boldsymbol{g}_\mu} \right)}}{K} \right\|^2\right),\\
&\leq\mathbb E \left(\frac{ \left(K- A  \right)\sum\limits_{i \in \mathcal{N}_1}  {K_i \left\| \nabla F_i\left( {\boldsymbol{g}_\mu} \right)\right\| }}{{KA }}+\frac{\left(K-A \right) \sum\limits_{i \in \mathcal{N}_2} K_i {\left\| \nabla F_i\left( {\boldsymbol{g}_\mu} \right)\right\| }}{KA} \right. \\& \left. +\frac{\sum\limits_{i \in \mathcal{N}_2} K_i {\left\| \nabla \hat F_i\left( {\boldsymbol{g}_\mu} \right) \right\|}}{A}+\frac{\sum\limits_{i \in \mathcal{N}_3} K_i {\left\| \nabla F_i\left( {\boldsymbol{g}_\mu} \right) \right\|}}{K}\right)^2.
\end{split}
 \end{equation}
Since $\| \nabla F_i\left(\boldsymbol{g}_\mu\right) \|^2 \le\zeta_{\mu}^1+\zeta_{\mu}^2 \| \nabla F \left(\boldsymbol{g}_{\mu} \right)\|^2$, $\| \nabla \hat F_i\left(\boldsymbol{g}_{\mu} \right)\|^2 \leq \zeta_{\mu}^1+\zeta_{\mu}^2 \| \nabla F \left(\boldsymbol{g}_{\mu} \right)\|^2$, (\ref{eq:eb}) can be simplified by
 \begin{equation}\label{eq:error}\small
\begin{split}
&\mathbb E\left(\| \boldsymbol{e}_\mu \|^2\right)\leq\mathbb E \left(\frac{ \left(K- A  \right)\sqrt{\zeta_{\mu}^1+\zeta_{\mu}^2 \| \nabla F \left(\boldsymbol{g}_{\mu} \right)\|^2}}{{K}} \right. \\& \left.
+\frac{\sum\limits_{i \in \mathcal{N}_2} K_i {\left\| \nabla \hat F_i\left( {\boldsymbol{g}_\mu} \right) \right\|}}{A}+\frac{\left(K-A\right)\sqrt{\zeta_{\mu}^1+\zeta_{\mu}^2 \| \nabla F \left(\boldsymbol{g}_{\mu} \right)\|^2}}{K}\right)^2\\
\leq&\mathbb E \left(\frac{ 2\left(K- A  \right)\sqrt{\zeta_{\mu}^1+\zeta_{\mu}^2 \| \nabla F \left(\boldsymbol{g}_{\mu} \right)\|^2}}{{K}}\right. \\& \left.
+{\sqrt{\zeta_{\mu}^1+\zeta_{\mu}^2 \| \nabla F \left(\boldsymbol{g}_{\mu} \right)\|^2}} \right)^2,\\
=&\mathbb E \left(\frac{ 4\left(K- A  \right)^2{\left( \zeta_{\mu}^1+\zeta_{\mu}^2 \| \nabla F \left(\boldsymbol{g}_{\mu} \right)\|^2 \right)}}{{K^2}}+\zeta_{\mu}^1+\zeta_{\mu}^2 \| \nabla F \left(\boldsymbol{g}_{\mu} \right)\|^2\right. \\& \left.
+\frac{4\left(K-A\right) {\left(\zeta_{\mu}^1+\zeta_{\mu}^2 \| \nabla F \left(\boldsymbol{g}_{\mu} \right)\|^2 \right)}}{K} \right),\\
\le&\mathbb E \left( \frac{\left( 9K-8A\right) \left(\zeta_{\mu}^1+\zeta_{\mu}^2 \| \nabla F \left(\boldsymbol{g}_{\mu} \right)\|^2 \right) }{K}\right)\\
=&\frac{\left(9K- 8\mathbb E\left(A\right)\right)}{K} \left(\zeta_{\mu}^1+\zeta_{\mu}^2 \| \nabla F \left(\boldsymbol{g}_{\mu} \right)\|^2 \right).
\end{split}
 \end{equation}
Since $\mathbb E\left(1-a_{i,\mu}\right)=1-p_{i,\mu}$ and $\mathbb E\left(a_{i,\mu}\right)=p_{i,\mu}$, we have $\mathbb E\left(A\right)={\sum\limits_{i=1}^U K_i\left(1-p_{i,\mu}\right)\mathbbm{1}_{\left\{ E_{i,\mu}\le \gamma \right\}}+\sum\limits_{i=1}^U K_ip_{i,\mu}}$.
 
Substitute (\ref{eq:error}) into (\ref{itproofas3_2}), we have
\begin{equation}\label{eq:EF}
\begin{split}
\mathbb E \left[ F\left(\boldsymbol{g}_{\mu+1}\right)\right] \leq& \mathbb E\left(F \left(\boldsymbol{g}_{t} \right)\right)+\frac{\zeta_{\mu}^1\left(9K- 8\mathbb E\left(A\right)\right)}{2LK}\\
&-\frac{1}{2L} \left(1- \frac{\zeta_{\mu}^2\left(9K- 8\mathbb E\left(A\right)\right) }{K} \right)   \| \nabla F\left(\boldsymbol{g}_{t} \right)\|^2.
\end{split}
\end{equation}

Subtract $\mathbb E\left[ F\left(\boldsymbol{g}^{*}\right)\right]$ in both sides of (\ref{eq:EF}), we have
\begin{equation}\label{eq:EF2}
\begin{split}
&\mathbb E\left[ F\left(\boldsymbol{g}_{\mu+1}\right)-F\left(\boldsymbol{g}^{*}\right)\right]\\& \leq\mathbb E\left(F \left(\boldsymbol{g}_{\mu} \right)-F\left(\boldsymbol{g}^{*}\right)\right)
+\frac{\zeta_{\mu}^1\left(9K- 8\mathbb E\left(A\right)\right)}{2LK}\\
&-\frac{1}{2L} \left(1- \frac{\zeta_{\mu}^2\left(9K- 8\mathbb E\left(A\right)\right) }{K} \right)   \| \nabla F\left(\boldsymbol{g}_{t}\right)\|^2.
\end{split}
\end{equation}
Given (\ref{eq:Fg}), (\ref{eq:EF2}) can be rewritten as
\begin{equation}\label{eq:EF3}
\begin{split}
\mathbb E\left[ F\left(\boldsymbol{g}_{\mu+1}\right)-F\left(\boldsymbol{g}^{*}\right)\right]  \leq &\frac{\zeta_{\mu}^1\left(9K- 8\mathbb E\left(A\right)\right)}{2LK}\\
&\!\!\!\!\!\!\!\!\!\!\!\!\!\!\!\!\!\!\!\!\!\!\!\!\!\!\!\!\!\!\!\!\!\!\!\!\!\!\!\!\!\!\!\!\!\!\!\!\!\!\!\!\!\!+\left(1-\frac{\vartheta}{L}+\frac{\vartheta\zeta_{\mu}^2\left(9K- 8\mathbb E\left(A\right)\right)}{LK}\right)\mathbb E\left(F\left(\boldsymbol{g}_{\mu}\right)-F\left(\boldsymbol{g}^{*}\right)\right).
\end{split}
\end{equation}
This completes the proof.

\subsection{Proof of Theorem \ref{th:2}}\label{Ap:a}
{\color{black}To prove Theorem \ref{th:2}, we only need to derive $\mathbb E\left(\| \boldsymbol{e}_\mu \|^2\right)$ in (\ref{itproofas3_2}). For the proposed FL algorithm with SGD, the gradient deviation $\boldsymbol{e}_\mu^\textrm{SGD}$ caused by the users that do not transmit their local FL models and the SGD training method is given by

\begin{equation}\label{eq:errormuSGD}\small
\begin{split}
\boldsymbol{e}_\mu^\textrm{SGD}&=\nabla F\left(\boldsymbol{g}_{\mu}\right)\\
&-\frac{{\!\!\sum\limits_{i=1}^U\! \left(1-a_{i,\mu}\right)\!K_i\!\left( \nabla F_i\left(\boldsymbol{g}_{\mu}\right)\!+\! \nabla \hat F_i\left( \boldsymbol{g}_{\mu}\right)\! \right)\!\!\mathbbm{1}_{\left\{ E_{i,\mu}\le \gamma \right\}} \! }}{\sum\limits_{i=1}^U K_i\left(1-a_{i,\mu}\right)\mathbbm{1}_{\left\{ E_{i,\mu}\le \gamma \right\}}+\sum\limits_{i=1,i \ne i^*}^U\sum\limits_{k \in \mathcal{K}_{i,\mu}}  a_{i,\mu}+K_{i^*}}\\
&-\frac{{\!\!\!\!\sum\limits_{i=1,i\ne i^*}^U\sum\limits_{k \in \mathcal{K}_{i,\mu}}a_{i,\mu}\nabla {f_{ik}\left( { \boldsymbol{ g}_\mu} \right)}  }-K_i^*\nabla F_{i^*}\left(\boldsymbol{g}_{\mu} \right)}{\sum\limits_{i=1}^U K_i\left(1-a_{i,\mu}\right)\mathbbm{1}_{\left\{ E_{i,\mu}\le \gamma \right\}}+\sum\limits_{i=1,i \ne i^*}^U\sum\limits_{k \in \mathcal{K}_{i,\mu}}  a_{i,\mu}+K_{i^*}},
\end{split}
\end{equation}
In (\ref{eq:errormuSGD}), $\sum\limits_{i=1,i\ne i^*}^U a_{i,\mu}\nabla {f\left( { \boldsymbol{ g}\left(\boldsymbol{a}_\mu\right)} \right)} $ is the sum of the local FL models that are transmitted from the users, except user $i^*$, that have RBs. Since the BS needs to use the local FL model parameters of user $i^*$ for the predictions of other users' local FL models, user $i^*$'s local FL model must be trained by all of its collected data. Hence, $K_i^*\nabla F_{i^*}\left(\boldsymbol{g}_{\mu}\right)$ is the local model that is transmitted by user $i^*$. Given (\ref{eq:errormuSGD}) and $\| \nabla F_i\left(\boldsymbol{g}_\mu\right) \|^2, \| \nabla \hat F_i\left(\boldsymbol{g}_\mu\right) \|^2, \| \nabla f_{ik}\left(\boldsymbol{g}_\mu\right) \|^2 \le\zeta_{\mu}^1+\zeta_{\mu}^2 \| \nabla F \left(\boldsymbol{g}_{\mu} \right)\|^2$, $\left\|\boldsymbol{e}_\mu^\textrm{SGD} \right\|$ can be expressed by
\begin{equation}\label{eq:sgde}\small
\begin{split}
\left\|\boldsymbol{e}_\mu^\textrm{SGD} \right\|\leq&\frac{\left(K-A^\textrm{S}\right)K_i^*\left\|\nabla F_{i^*}\left(\boldsymbol{g}_{\mu} \right)\right\|}{KA^\textrm{S}}+\frac{\left(K-A^\textrm{S}\right)\sum\limits_{i \in \mathcal{N}_2}K_i \left\| \nabla F_{i}\left(\boldsymbol{g}_{\mu} \right) \right\|}{KA^\textrm{S}}\\
&+\frac{\sum\limits_{i \in \mathcal{N}_2}K_i\left\| \nabla \hat F_{i}\left(\boldsymbol{g}_{\mu}\right)\right\|}{A^\textrm{S}}+\frac{\left(K-A^\textrm{S}\right)\sum\limits_{i \in \mathcal{N}_1}\sum\limits_{k \in \mathcal{K}_{i,\mu}} \left\| \nabla f_{ik}\left(\boldsymbol{g}_{\mu} \right) \right\|}{KA^\textrm{S}}\\
&+
\frac{\sum\limits_{i \in \mathcal{N}_3}K_i \left\| \nabla F_{i}\left(\boldsymbol{g}_{\mu} \right) \right\|}{K}+\frac{\sum\limits_{i \in \mathcal{N}_1}\sum\limits_{k \in \mathcal{K}_{i}/\mathcal{K}_{i,\mu}} \left\| \nabla f_{ik}\left(\boldsymbol{g}_{\mu} \right) \right\|}{K},\\
\leq& \frac{2\left(K-A^\textrm{S}\right)\sqrt{\zeta_{\mu}^1+\zeta_{\mu}^2 \| \nabla F \left(\boldsymbol{g}_{\mu} \right)\|^2}}{K}+\sqrt{\zeta_{\mu}^1+\zeta_{\mu}^2 \| \nabla F \left(\boldsymbol{g}_{\mu} \right)\|^2},\\
\end{split}
\end{equation}
where the second inequality stems from the fact that $\frac{\left(K-A^\textrm{S}\right)K_i^*\left\|\nabla F_{i^*}\left(\boldsymbol{g}_{\mu} \right)\right\|}{KA^\textrm{S}}+\frac{\left(K-A^\textrm{S}\right)\sum\limits_{i \in \mathcal{N}_2}K_i \left\| \nabla F_{i}\left(\boldsymbol{g}_{\mu} \right) \right\|}{KA^\textrm{S}}+\frac{\left(K-A^\textrm{S}\right)\sum\limits_{i \in \mathcal{N}_1}\sum\limits_{k \in \mathcal{K}_{i,\mu}} \left\| \nabla f_{ik}\left(\boldsymbol{g}_{\mu} \right) \right\|}{KA^\textrm{S}}= \frac{\sum\limits_{i \in \mathcal{N}_3}K_i \left\| \nabla F_{i}\left(\boldsymbol{g}_{\mu} \right) \right\|}{K}+\frac{\sum\limits_{i \in \mathcal{N}_1}\sum\limits_{k \in \mathcal{K}_{i}/\mathcal{K}_{i,\mu}} \left\| \nabla f_{ik}\left(\boldsymbol{g}_{\mu} \right) \right\|}{K}= \frac{\left(K-A^\textrm{S}\right)\sqrt{\zeta_{\mu}^1+\zeta_{\mu}^2 \| \nabla F \left(\boldsymbol{g}_{\mu} \right)\|^2}}{K}$ and $\frac{\sum\limits_{i \in \mathcal{N}_2}K_i\left\| \nabla \hat F_{i}\left(\boldsymbol{g}_{\mu} \right)\right\|}{A^\textrm{S}} \leq \sqrt{\zeta_{\mu}^1+\zeta_{\mu}^2 \| \nabla F \left(\boldsymbol{g}_{\mu} \right)\|^2}$.

Based on (\ref{eq:error}) and (\ref{eq:sgde}), we have $ 
 \mathbb E\left(\| \boldsymbol{e}_\mu^\textrm{SGD} \|^2\right)=\frac{\left(9K- 8\mathbb E\left(A^\textrm{S}\right)\right)}{K} \left(\zeta_{\mu}^1+\zeta_{\mu}^2 \| \nabla F \left(\boldsymbol{g}_{\mu} \right)\|^2 \right)$ with $\mathbb E\left(A^\textrm{S}\right)=\sum\limits_{i=1}^U K_i\left(1-p_{i,\mu}\right)\mathbbm{1}_{\left\{ E_{i,\mu}\le \gamma \right\}}+\sum\limits_{i=1,i \ne i^*}^U\sum\limits_{k \in \mathcal{K}_{i,\mu}}  p_{i,\mu}+K_{i^*}$.

 Based on (\ref{eq:EF}) to (\ref{eq:EF3}), we have
\begin{equation}\label{eq:EF3}
\begin{split}
\mathbb E \left[ F\left(\boldsymbol{g}_{\mu+1}\right)-F\left(\boldsymbol{g}^{*}\right)\right]  \leq &\frac{\zeta_{\mu}^1\left(9K- 8\mathbb E\left(A^\textrm{S}\right)\right)}{2LK}\\
&\!\!\!\!\!\!\!\!\!\!\!\!\!\!\!\!\!\!\!\!\!\!\!\!\!\!\!\!\!\!\!\!\!\!\!\!\!\!\!\!\!\!\!\!\!\!\!\!+\left(1-\frac{\vartheta}{L}+\frac{\vartheta\zeta_{\mu}^2\left(9K- 8\mathbb E\left(A^\textrm{S}\right)\right)}{LK}\right)\mathbb E\left(F\left(\boldsymbol{g}_{\mu} \right)-F\left(\boldsymbol{g}^{*}\right)\right).
\end{split}
\end{equation}
This completes the proof.}

\bibliographystyle{IEEEbib}
\bibliography{references1}
\end{document}